\newtheorem{theorem}{Theorem}
\newtheorem{example}{Example}
\newtheorem{definition}{Definition}
\newcommand{\nop}[1]{}
\begin{document}
%
\title{FairSample: Training Fair and Accurate Graph Convolutional Neural Networks Efficiently}

%
%
%
%


\author{
Zicun Cong, Baoxu Shi, Shan Li, Jaewon Yang, Qi He,~\IEEEmembership{Senior Member, IEEE}, Jian Pei,~\IEEEmembership{Fellow, IEEE}
\IEEEcompsocitemizethanks{
\IEEEcompsocthanksitem Zicun Cong was with School of Computing Science, Simon Fraser University. E-mail: zcong@sfu.ca\protect\\
\vspace{-0.20cm}
\IEEEcompsocthanksitem Baoxu Shi was with LinkedIn Corporation, Sunnnyvale, USA. E-mail: dashi@linkedin.com\protect\\
\vspace{-0.20cm}
\IEEEcompsocthanksitem Shan Li was with LinkedIn Corporation, Sunnnyvale, USA. E-mail: shali@linkedin.com\protect\\
\vspace{-0.20cm}
\IEEEcompsocthanksitem Jaewon Yang was with LinkedIn Corporation, Sunnnyvale, USA. E-mail: jeyang@linkedin.com\protect\\
\vspace{-0.20cm}
\IEEEcompsocthanksitem Qi He was with LinkedIn Corporation, Sunnnyvale, USA. E-mail: qhe@linkedin.com\protect\\
\vspace{-0.20cm}
\IEEEcompsocthanksitem Jian Pei is with School of Computing Science, Simon Fraser University. E-mail: jpei@cs.sfu.ca\protect\\
}
}

%
%

\markboth{Journal of \LaTeX\ Class Files,~Vol.~14, No.~8, August~2015}%
{Shell \MakeLowercase{\textit{et al.}}: Bare Demo of IEEEtran.cls for Computer Society Journals}
%



\IEEEtitleabstractindextext{%
\begin{abstract}
Fairness in Graph Convolutional Neural Networks (GCNs) becomes a more and more important concern as GCNs are adopted in many crucial applications. Societal biases against sensitive groups may exist in many real world graphs. GCNs trained on those graphs may be vulnerable to being affected by such biases.
In this paper, we adopt the well-known fairness notion of demographic parity and tackle the challenge of training fair and accurate GCNs efficiently.
We present an in-depth analysis on how graph structure bias, node attribute bias, and model parameters may affect the demographic parity of GCNs. 
Our insights lead to FairSample, a framework that jointly mitigates the three types of biases. We employ two intuitive strategies to rectify graph structures. First, we inject edges across nodes that are in different sensitive groups but similar in node features. Second, to enhance model fairness and retain model quality, we develop a learnable neighbor sampling policy using reinforcement learning. To address the bias in node features and model parameters, FairSample is complemented by a regularization objective to optimize fairness.
\end{abstract}

\begin{IEEEkeywords}
Graph neural network, Sampling, Fairness
\end{IEEEkeywords}}

\maketitle

\IEEEdisplaynontitleabstractindextext

%
\IEEEpeerreviewmaketitle

\IEEEraisesectionheading{\section{Introduction}\label{sec:introduction}}

\label{sec:intro}

Graph Convolutional Neural Networks (GCNs) are deep neural networks learning latent representations for a given graph~\cite{huangAdaptiveSamplingFast2018, DBLP:conf/iclr/KipfW17}. GCNs learn latent representations of nodes (node embeddings) from the graph topology and then use the embeddings for prediction tasks.
Due to the superior prediction performance, GCNs enjoy many real world applications, such as talent recruitment~\cite{DBLP:conf/ijcnn/LiuZLSH21}, healthcare~\cite{DBLP:conf/kdd/CuiSTMWL20}, and  recommender system~\cite{DBLP:conf/sigir/0001DWLZ020}. At the same time, fairness in GCNs becomes a serious concern. GCNs can inherit the bias in the underlying graphs and make discriminatory decisions correlated with sensitive attributes, such as age, gender, and nationality~\cite{DBLP:conf/wsdm/DaiW21, DBLP:conf/aistats/LaclauRCL21, DBLP:conf/iclr/LiWZHL21}. 
In order to use GCNs for many real world applications, such as job recommendation, it is critical for GCNs to make fair predictions for people with different sensitive  values~\cite{DBLP:journals/csur/MehrabiMSLG21}.

Demographic parity~\cite{DBLP:conf/wsdm/DaiW21, DBLP:conf/icml/AgarwalBD0W18} is a well adopted notion of fairness, which aims for similar predictions for different demographic groups (see Section~\ref{sec:preliminary_fairness} for a brief review). Being closely related to the legal doctrine of disparate impact~\cite{DBLP:conf/ijcai/RahmanS0019, DBLP:journals/corr/abs-2201-00292}, demographic parity has drawn tremendous attention.
There are many challenges in improving the demographic parity fairness of GCNs. First, GCNs leverage node features to generate embeddings, but node features are often correlated with sensitive attributes~\cite{DBLP:conf/wsdm/DaiW21}. For example, it has been shown that a user's postal code may manifest the user's nationality~\cite{barocas-hardt-narayanan}. 
Second, a graph structure can also inherit societal bias -- nodes with the same sensitive values are more likely to be connected due to homophily. 
For example, it has been observed in social networks that people with the same religions are more likely to become friends~\cite{thelwall2009homophily}. 
GCNs learn node embeddings by aggregating
and transforming node feature vectors over graph
topology. 
In a biased graph, a node mainly receives information from neighbors in the same sensitive group. Therefore, the embeddings learned on the graph tend to be correlated with sensitive attributes and lead to discriminatory predictions~\cite{DBLP:conf/iclr/LiWZHL21, DBLP:conf/www/DongLJL22}.
Last, there is a well-known tradeoff between accuracy and demographic parity fairness of machine learning models~\cite{DBLP:conf/uai/AgarwalLZ21, DBLP:conf/fat/MenonW18, DBLP:journals/jmlr/ZhaoG22}. A fair GCN should achieve the desirable fairness level with less sacrifice in utility~\cite{DBLP:conf/iclr/LiWZHL21}.

There are some pioneering methods tackling demographic parity fairness in GCNs.  For example, FairGNN~\cite{DBLP:conf/wsdm/DaiW21} learns a demographic parity fair GCN classifier by adding one or more fairness penalty terms in the loss function of the classifier, which rectifies the model parameters. However, FairGNN ignores the important feature aggregation mechanisms of GCNs and fails to explore their effects on improving model fairness. As a result, the fairness and accuracy performance achieved by FairGNN is limited.

A few unsupervised fairness enhancing methods~\cite{DBLP:conf/iclr/LiWZHL21, DBLP:conf/www/DongLJL22} optimize the graph adjacency matrix by dropping edges so that the learned node embeddings are independent of sensitive attributes. As they adjust the adjacency matrix by assigning a learnable parameter to each edge of the input graph, they cannot scale up to large real world networks. Moreover, as the node embeddings in those methods are not optimized for specific graph mining tasks, the learned embeddings may not perform well in accuracy and fairness on a user's target task, such as node classification. Last, dropping edges may not be enough to adjust the biased neighborhood composition if all neighbors of a node have the same sensitive value.

In this paper, we tackle the challenges of training demographic parity fair and accurate GCN models efficiently and make several contributions. First, we systematically analyze how graph structure bias, node attribute bias, and model parameters may affect the demographic parity of GCNs. Second, based on the analysis and insights, we propose FairSample, a novel framework to improve the demographic parity of GCNs with a minimal tradeoff in accuracy and efficiency.  FairSample has three major components to address the challenges in building fair and accurate GCNs efficiently. To address the societal bias in an input graph, FairSample has a novel edge injector and an original computation graph sampler to rectify the feature aggregation process of GCNs. The edge injector adds edges across nodes with different sensitive values and ensures that each node has enough neighbors with different sensitive values from which GCNs can learn unbiased embeddings. Computation graph sampler selects neighbors that help GCNs make both accurate and fair predictions. We employ reinforcement learning to train the sampler with a small number of parameters such that it enhances the demographic parity of GCNs and retains good accuracy and scalability. To address the bias in node features and model parameters, FairSample is complemented by a regularization objective to optimize fairness following the state-of-the-art approach~\cite{DBLP:conf/wsdm/DaiW21}.
Last, we conduct extensive experiments on a series of real world datasets to demonstrate that FairSample has superior capability in training fair and accurate GCNs. Our results show that FairSample outperforms the state-of-the-art fair GCN methods in terms of both fairness and accuracy. Compared to the GCN methods without fairness constraints, FairSample achieves up to $65.5\%$ improvement in fairness at the cost of only at most $5.0\%$ accuracy decrease. Furthermore, FairSample incurs only a small extra computational overhead, making it more computationally efficient than the majority of the baseline methods.

The rest of the paper is organized as follows. In Section~\ref{sec:related_work}, we review the related work.  The essentials of graph convolutional neural networks and demographic parity are reviewed in Section~\ref{sec:preliminaries}. In Section~\ref{sec:problem_formulation}, we explore the intuitions and formulate the problem.  We develop FairSample in Section~\ref{sec:proposed_method} and report the experimental results in Section~\ref{sec:experiments}.  In Section~\ref{sec:conclusion}, we conclude the paper.

\section{Related Work}\label{sec:related_work}

In this section, we briefly review the related work on fair GCNs~\cite{DBLP:conf/icml/BoseH19, DBLP:conf/wsdm/DaiW21, DBLP:conf/iclr/LiWZHL21} and sampling strategies for efficient GCN training~\cite{DBLP:conf/nips/HamiltonYL17, chenFastgcnFastLearning2018, DBLP:conf/kdd/YoonGSNHY21}.

\subsection{Fair Graph Convolutional Neural Networks}

Most existing fair machine learning methods are developed for independently and identically distributed (i.i.d) data~\cite{DBLP:conf/iclr/LiWZHL21}. They can be categorized into three groups.
The pre-processing methods enhance model fairness by repairing biased training datasets, such as revising data labels~\cite{DBLP:conf/icdm/KamiranCP10, DBLP:journals/kais/KamiranC11, DBLP:conf/kdd/ThanhRT11} and perturbing data attributes~\cite{DBLP:journals/tkde/HajianD13, DBLP:conf/aistats/KilbertusRSMV20, DBLP:conf/kdd/FeldmanFMSV15}. 
The post-processing methods enhance the fairness of a model by calibrating the prediction scores of the model~\cite{DBLP:conf/fat/MenonW18, DBLP:conf/icdm/KamiranKZ12, DBLP:conf/nips/ValeraSR18}. 
The in-processing methods revise the training process of specific machine learning models to enhance model fairness. Typically, fairness notion related regularizers are added into model objective functions~\cite{DBLP:conf/nips/DoniniOBSP18, DBLP:conf/ijcai/ManishaG20, DBLP:conf/icml/AgarwalBD0W18}. 
Please see~\cite{DBLP:journals/csur/MehrabiMSLG21} for a survey of fair machine learning models.

In general, graph data is non-IID.
It has been discovered that both node attributes and graph structure can contain societal bias~\cite{DBLP:conf/www/DongLJL22, DBLP:conf/wsdm/DaiW21}.
Due to the feature aggregation mechanisms of GCNs, the predictions of GCNs may inherit both types of biases~\cite{DBLP:conf/wsdm/DaiW21}. However, 
the existing fairness methods on i.i.d data are not suitable to be directly applied to GCNs, as they cannot handle the graph structure bias~\cite{DBLP:conf/iclr/LiWZHL21}.
Very recently, there are exciting advances in enforcing fairness in GCNs. Most studies adopt demographic parity as the fairness measure.
FCGE~\cite{DBLP:conf/icml/BoseH19}, FairGNN~\cite{DBLP:conf/wsdm/DaiW21}, and DFGNN~\cite{DBLP:conf/esann/OnetoND20} apply adversarial learning to improve the fairness in node embeddings. The embeddings are optimized to prevent a discriminator from accurately predicting the corresponding sensitive attribute values.
These methods do not explicitly rectify biases in graph structures and have suboptimal fairness performance.

A few pre-processing methods are proposed to remedy graph structure biases and improve demographic parity.
FairWalk~\cite{DBLP:conf/ijcai/RahmanS0019} extends the node2vec graph embedding method~\cite{DBLP:conf/kdd/GroverL16} using sensitive attribute-aware stratified sampling to create a diversified node context. Li~\textit{et~al.}~\cite{DBLP:conf/iclr/LiWZHL21} and Laclau~\textit{et~al.}~\cite{DBLP:conf/aistats/LaclauRCL21} propose to adjust the adjacency matrix of an input graph such that nodes from different sensitive groups have similar embeddings. EDITS~\cite{DBLP:conf/www/DongLJL22} produces a fair version of the input graph by altering its adjacency matrix and node attributes. Kose and Shen~\cite{DBLP:journals/corr/abs-2201-08549} propose a fairness-aware data augmentation framework on node attributes and graph structures for graph contrastive learning. As these methods do not optimize their utility on specific tasks, they may lose considerable accuracy on a user's target task.

Some studies explore other fairness notions in GCNs. For example, Agarwal~\textit{et~al.}~\cite{DBLP:conf/uai/AgarwalLZ21} and Ma~\textit{et~al.}~\cite{DBLP:conf/wsdm/MaGWYZL22} adopt contrastive learning to train GCN classifiers under the constraint of counterfactual fairness~\cite{DBLP:conf/nips/KusnerLRS17}.
In favor of individual fairness, Dong~\textit{et~al.}~\cite{DBLP:conf/kdd/DongKTL21} propose a ranking-based regularizer on GCNs. Some studies~\cite{genfairgnn_neurips21, 10.1145/3340531.3411872, DBLP:journals/corr/abs-2202-13547} investigate and mitigate the accuracy disparity between sensitive groups. These methods, however, may not improve demographic parity in GCNs~\cite{DBLP:journals/csur/MehrabiMSLG21}.

Our approach FairSample is an in-processing fairness enhancing method, which revises the training process of GCNs to enhance their demographic parity. Different from the existing fairness enhancing methods, 
FairSample rectifies the feature aggregation process and models parameters of GCNs at the same time to explicitly optimize GCNs towards good accuracy and fairness.

\subsection{Sampling Strategies for Efficient GCN Training}

Most GCN models~\cite{DBLP:conf/iclr/KipfW17,  DBLP:conf/wsdm/DaiW21} use the whole $K$-hop neighborhood of each node to train node embeddings and result in high computation and memory costs. Thus, those methods cannot scale up to large graphs~\cite{DBLP:conf/nips/HamiltonYL17, DBLP:conf/kdd/YoonGSNHY21}.
To scale up GCN models, sampling strategies have been proposed to limit the neighborhood size for each given node and reduce the cost of feature aggregation. 
The sampling strategies can be a heuristic probability distribution~\cite{DBLP:conf/nips/HamiltonYL17, chenFastgcnFastLearning2018, huangGraphRecurrentNetworks2019, bojchevskiScalingGraphNeural2020} or a learned parameterized model~\cite{liuBanditSamplersTraining2020,congMinimalVarianceSampling2020,DBLP:conf/kdd/YoonGSNHY21}.

Our method also employs a sampling strategy for training GCNs. Different from the existing  methods, our sampling strategy is designed to sample balanced and informative sets of neighbors to enhance model fairness with minimal tradeoff in model utility. 

\section{Preliminaries}
\label{sec:preliminaries}
In this section, we review the essentials of graph convolutional neural networks (GCNs) and how to train fair GCNs. Table~\ref{tbl:notations} shows the commonly used notations in the paper.

\begin{table}[t]
\centering
\caption{Frequently used notations.}
\begin{tabular}{|c|c|}
\hline
\textbf{Symbol} & \textbf{Definition} \\
\hline
 $\mathcal{G} = (\mathcal{V}, \mathcal{E}, \mathbf{X})$ & \makecell{An attributed graph with nodes $\mathcal{V}$, edges $\mathcal{E}$,\\ and node feature vectors $\mathbf{X}$} \\
 $\mathcal{A} = \{a_1, \ldots, a_\zeta\}$ & Domain of the categorical sensitive attribute \\
 $\mathcal{V}_{a_i}$ & The sets of nodes taking the sensitive value $a_i$\\
 $\mathbf{x}_i$ & Feature vector of node $v_i$ \\
 $s_i \in \mathcal{A}$ & Sensitive attribute value of $v_i$ \\
 $\Gamma_v$ & The set of 1-hop neighbors of node $v$ \\
 \hline
 $f_{G}$ & A $K$-layer GCN node classifier \\
 $\mathbf{h}_i^l\ (1\leq l\leq K)$ & Embedding of $v_i$ in the $l$-th layer of a GCN\\
 $\mathbf{W}_l\ (1\leq l\leq K)$ & Parameters in the $l$-th layer of a GCN\\
 $\mathcal{T}_{v_i}$ & Computation graph of $v_i$ \\
 $\mathcal{L}_{\text{dp}}$ & Demographic parity regularizer \\
 \hline
 $p(v)$ & A sample of child nodes of $v$\\
 $\text{P}(v_j | v_i)$ & Probability of sampling a child node $v_j$ of $v_i$\\
 \hline
\end{tabular}
\label{tbl:notations}
\end{table}

\subsection{Graph Convolutional Neural Networks}
\label{sec:notations}
Denote by $\mathcal{G} = (\mathcal{V}, \mathcal{E}, \mathbf{X})$ an \emph{attributed graph}, where $\mathcal{V}=\{v_1, \ldots, v_n\}$ is a set of $n$ nodes, $\mathcal{E} \subseteq \mathcal{V} \times \mathcal{V}$ is a set of edges, $\mathbf{X} \in \mathbb{R}^{n \times d}$ is a matrix of node features, and $\mathbf{x}_i \in \mathbb{R}^d$ is the feature vector of node $v_i$ $(1 \leq i \leq n)$. 
Denote by $\Gamma_v$ the set of 1-hop neighbors of node $v$. To investigate the fairness in GNNs, we assume that all nodes have a categorical \emph{sensitive attribute} $\mathcal{A}$, such as age, gender, and disability. Each node $v_i$ is associated with a \emph{sensitive value} $s_i \in \mathcal{A}$, where $\mathcal{A} = \{a_1, \ldots, a_\zeta\}$ is the domain of the sensitive attribute. Denote by $\mathcal{V}_{a_i} = \{v_j | v_j \in \mathcal{V}, s_j = a_i\}$ and $\mathcal{V}_{\neq a_i} = \mathcal{V} \setminus \mathcal{V}_{a_i}$ $(1 \leq i \leq \zeta)$ the sets of nodes taking and not taking the sensitive value $a_i$, respectively.

We focus on the semi-supervised node classification problem, which has wide practical applications~\cite{DBLP:conf/wsdm/DaiW21, DBLP:conf/iclr/KipfW17}. Our proposed method can also be applied to other graph learning tasks, like link prediction problem. Denote by $\mathcal{V}_L = \{v_1, \ldots, v_m\} \subseteq \mathcal{V}$ a set of labeled nodes and by $Y_L = \{y_1, \ldots, y_m\}$ the corresponding class labels. We follow the common setting in fair machine learning~\cite{DBLP:conf/wsdm/DaiW21, DBLP:conf/ijcai/ManishaG20} and assume that the class labels $y_i \in \{0, 1\}$ $(1 \leq i \leq m)$ are binary. Our proposed method can also be applied to multi-class classification problems.

A $K$-layer GCN~\cite{DBLP:conf/nips/HamiltonYL17, DBLP:conf/kdd/JiaLYYLA20} uses graph structures and node attributes to predict node labels.
In the $l$-th layer  $(1 \leq l \leq K)$, the GCN updates the embedding $\mathbf{h}^l_v$ of node $v$ by aggregating the $(l-1)$-th layer embeddings of the neighbors of $v$. A simple and general formulation of GCNs~\cite{chenFastgcnFastLearning2018, DBLP:conf/kdd/YoonGSNHY21} is
\begin{equation}
    \mathbf{h}_{v}^{l} = \rho(\frac{1}{|\Gamma_v|+1} \sum_{v_i \in \Gamma_v \cup \{v\}} \mathbf{h}_{v_i}^{l-1} \mathbf{W}_{l}),
    \label{eq:vanilla_emebdding}
\end{equation}
where $\rho(\cdot)$ is a non-linear activation function, $\mathbf{W}_{l} \in \mathbb{R}^{d_{l-1} \times d_{l}}$ is a learnable transformation matrix in the $l$-th layer of a GCN, $d_{l}$ is the dimensionality of the $l$-th layer node embeddings, and $\mathbf{h}_v^{0} = \mathbf{x}_{v}$. The GCN predicts the label of $v$ as $\hat{y}_v = \sigma(\mathbf{h}^{K}_v \mathbf{W}_c)$, where $\mathbf{W}_c$ is the classifier parameters and $\sigma(\cdot)$ is the softmax function. The GCN is trained by optimizing the cross entropy loss $\mathcal{L}_{\text{accuracy}}$.

Given a very large graph, the limited GPU memory is often not enough to store the whole graph. To leverage GPU for fast computation, GCNs are trained using stochastic gradient descent, that is, only mini-batches of nodes and their corresponding computation graphs are loaded into GPU. \textit{Computation graph} describes how features are aggregated and transformed in each GCN layer to compute node embeddings~\cite{DBLP:conf/kdd/JiaLYYLA20}.
For each node $v \in \mathcal{V}$, the computation graph $\mathcal{T}_v$ of $v$ uses a tree structure defining how to compute embedding $\mathbf{h}_{v}^l$ by aggregating the previous layer embeddings of the neighbors of $v$~\cite{DBLP:conf/kdd/JiaLYYLA20}. Given $\mathcal{T}_{v}$, the embedding $\mathbf{h}_{v}^K$ can be computed by iteratively aggregating embeddings along the edges in $\mathcal{T}_{v}$ in a bottom-up manner.

\begin{example}[GCN Classifier and Computation Graph]
\label{exp:gcn}\rm
Given a simple example input graph shown in Figure~\ref{fig:computation_graph}(a), a 2-layer GCN  $f_{G}$ aims to predict the label of node $v_1$. 
Figure~\ref{fig:computation_graph}(b) shows the computation graph $\mathcal{T}_{v_1}$ of $v_1$ in the 2-layer GCN.
In the first layer of $f_{G}$, it computes the embeddings $\mathbf{h}_{v_1}^1$, $\mathbf{h}_{v_2}^1$, $\mathbf{h}_{v_3}^1$, and $\mathbf{h}_{v_4}^1$ of $v_1$, $v_2$, $v_3$, and $v_4$, respectively. By Equation~\ref{eq:vanilla_emebdding},  $\mathbf{h}_{v_{1}}^1=\rho(\frac{1}{4} \sum_{1 \leq i\leq 4}\mathbf{x}_{v_i} \mathbf{W}_1)$, $\mathbf{h}_{v_{2}}^1=\rho(\frac{1}{2} (\mathbf{x}_1 + \mathbf{x}_2) \mathbf{W}_1)$, $\mathbf{h}_{v_{3}}^1=\rho(\frac{1}{2} (\mathbf{x}_1 + \mathbf{x}_3) \mathbf{W}_1)$, and $\mathbf{h}_{v_{4}}^1=\rho(\frac{1}{3} (\mathbf{x}_1 + \mathbf{x}_2 + \mathbf{x}_4) \mathbf{W}_1)$. In the second layer of $f_{G}$, it computes the embedding $\mathbf{h}_{v_1}^2=\rho(\frac{1}{4}\sum_{1 \leq i\leq 4}\mathbf{h}_{v_i}^1 \mathbf{W}_2)$ of $v_1$. Finally, $f_{G}$ predicts the label of node $v_1$ as $\hat{y}_{v_1} = \sigma(\mathbf{h}^{2}_{v_1} \mathbf{W}_c)$.
\end{example}

Given a graph with an average node degree $d$, a $K$-layer GCN needs to aggregate information from $d^K$ neighbors for each node on average, which incurs high computation cost for large $d$~\cite{huangAdaptiveSamplingFast2018, liuBanditSamplersTraining2020}. Moreover, to compute the gradients of the loss function, for each $K$-hop neighbor $u$ of $v$, we need to store its all $K$ embeddings $\mathbf{h}_{u}^l\ (1 \leq l \leq K)$ in memory, which results in hefty memory footprints~\cite{DBLP:conf/iclr/LiuZYLCH22}.
To tackle the challenge, a sampling strategy can be used to train GCNs~\cite{DBLP:conf/nips/HamiltonYL17, DBLP:conf/kdd/YoonGSNHY21, congMinimalVarianceSampling2020} such that in each layer of a GCN, a node only aggregates embeddings from a small number of randomly sampled neighbors. Specifically, the sampling methods down-sample the computation graph $\mathcal{T}_v$ in a top-down manner such that each node in $\mathcal{T}_v$ contains at most $k$ child nodes. Then, the embedding $\mathbf{h}_v^{l}$ is computed by following the new computation graph. That is, 
$\mathbf{h}_{v}^{l} = \rho(\frac{1}{k+1} \sum_{v_i \in p(v) \cup \{v\}} \mathbf{h}_{v_i}^{l-1} \mathbf{W}_{l-1}),$ where $p(v)$ is a sample of $k$ child nodes of $v$ according to a sampling policy.
By regularizing the size of the computation graph in a GCN, the sampling methods can train the GCN in less cost. 
Figure~\ref{fig:computation_graph}(c) shows an example of down-sampled computation graph. Up to $1$ child node per node in $\mathcal{T}_{v_1}$ is sampled.

\begin{figure}[t]
    \centering
    \includegraphics[page=1, scale=0.15, trim=0 550 500 0, clip]{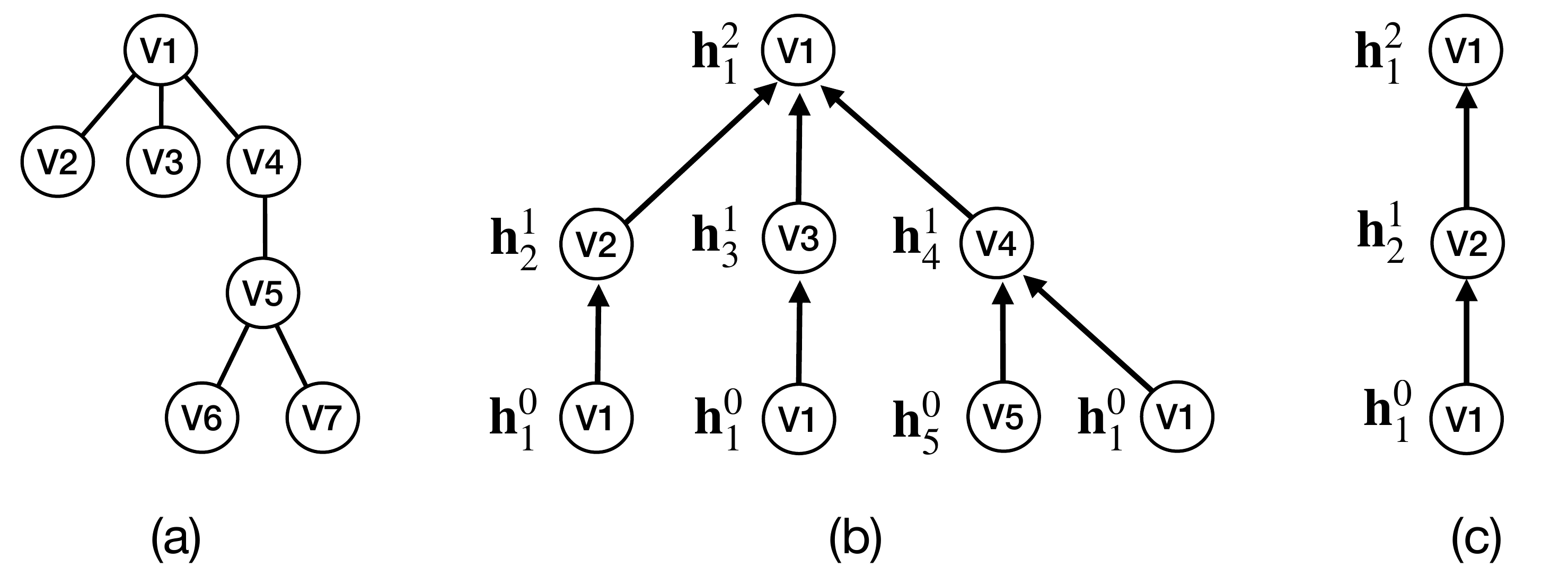}
    \caption{(a) An input graph  to a 2-layer GCN. (b) The original computation graph of node $v_1$ in the 2-layer GCN. (c) The down-sampled computation graph of node $v_1$ in the 2-layer GCN. The embedding of a node in a layer is plotted on the left of the node. The arrows indicate the directions of embedding aggregation. }
    \label{fig:computation_graph}
\end{figure}

\subsection{Fairness Notion of Demographic Parity}
\label{sec:preliminary_fairness}

Many notions of algorithmic fairness have been proposed in literature, such as demographic parity~\cite{DBLP:conf/icml/AgarwalBD0W18}, counterfactual fairness~\cite{DBLP:conf/wsdm/MaGWYZL22}, and individual fairness~\cite{DBLP:conf/innovations/DworkHPRZ12}. 
Due to its societal importance~\cite{DBLP:conf/ijcai/RahmanS0019, DBLP:journals/corr/abs-2201-00292}, demographic parity has been widely adopted in fair machine learning algorithms~\cite{DBLP:conf/wsdm/DaiW21, DBLP:conf/iclr/LiWZHL21, DBLP:conf/ijcai/ManishaG20, DBLP:conf/icml/AgarwalBD0W18}.

\begin{definition}[Demographic Parity \cite{DBLP:conf/icml/AgarwalBD0W18}] Given  random samples $(\mathbf{x}, s)$ drawn from a joint distribution over a feature space $\mathcal{X}$ and a sensitive attribute $\mathcal{A}$, a classifier $f(\cdot): \mathbb{R}^{d} \rightarrow \{0, 1\}$ is said to satisfy \textbf{demographic parity} if the prediction $f(\mathbf{x})$ is independent of the sensitive attribute value $s$, that is, $\forall a \in \mathcal{A}$, $\mathbb{E}[f(\mathbf{x})|s=a] = \mathbb{E}[f(\mathbf{x})]$.
\label{eq:dp}
\end{definition}

To train a fair classifier, in-processing fairness methods add one or more demographic parity penalty terms $\mathcal{L}_{\text{dp}}$ in the loss function of the classifier to penalize discriminatory behaviors. 
In particular, the traditional regularization methods~\cite{ DBLP:conf/wsdm/DaiW21, DBLP:conf/icml/AgarwalBD0W18, DBLP:conf/ijcai/ManishaG20} train a fair GCN classifier by optimizing the \emph{loss of fairness and utility}, defined by
\begin{equation}
    \mathcal{L} = \mathcal{L}_{\text{accuracy}} + \alpha \mathcal{L}_{\text{dp}},
    \label{eq:fair_loss}
\end{equation}
where  $\alpha$ is the weight of the fairness penalty.  

There is a well recognized tradeoff between model accuracy and demographic parity~\cite{DBLP:journals/corr/abs-2201-00292, DBLP:conf/uai/AgarwalLZ21, DBLP:conf/fat/MenonW18, DBLP:journals/jmlr/ZhaoG22}. GCNs use feature aggregation mechanisms over graph topology to learn node representations. Due to this unique property, applying Equation~\ref{eq:fair_loss} to regularize the model parameters alone may not be sufficient to train GCNs with good accuracy and fairness.
In this paper, we study how to rectify the feature aggregation process of GCNs by modifying their computation graphs so that fairness can be improved with minor tradeoff in utility.

\section{Intuition and Problem Formulation}
\label{sec:problem_formulation}
\label{sec:analyze}

Since GCNs perform node classification by aggregating and transforming node feature vectors over graph structures~\cite{DBLP:conf/kdd/JiaLYYLA20}, biased graph connections may lead to poor demographic parity~\cite{DBLP:conf/wsdm/DaiW21}. To fix the issue and strengthen demographic parity in node classification, one idea is to modify connections in input graphs~\cite{DBLP:conf/iclr/LiWZHL21, DBLP:conf/aistats/LaclauRCL21}.

To elaborate the insight of modifying input graph structures to improve demographic parity, for the sake of simplicity, let us analyze the fairness of \emph{simple GCNs}~\cite{DBLP:conf/icml/WuSZFYW19}, which are a type of GCNs without the non-linear activation function in Equation~\ref{eq:vanilla_emebdding}. 
Simple GCNs are widely adopted in recommender systems, thanks to its good scalability and comparable or even improving performance to general GCNs with nonlinear activation functions~\cite{DBLP:conf/sigir/0001DWLZ020, DBLP:conf/aaai/ChenWHZW20, DBLP:conf/sigir/WangJZ0XC20, DBLP:journals/corr/abs-2011-02260}.
We leave the theoretical analysis on general GCNs to future work.

Denote by $\tilde{\mathbf{A}} \in \mathbb{R}^{n \times n}$ a \emph{random walk probability matrix}, where each element $\tilde{\mathbf{A}}{[i, j]}$ $(1\leq i \leq n, 1\leq j \leq n)$ is the probability of a $K$-step random walk starting at $v_i$ and terminating at $v_j$. Let $\beta{[\neq a, a]} = \frac{1}{|\mathcal{V}_{\neq a}|} \sum_{v_i \in \mathcal{V}_{\neq a}, v_j \in \mathcal{V}_{a}} \tilde{\mathbf{A}}{[i, j]}$ be the average probability for a $K$-step random walk starting at a node with sensitive value not equal to $a$ and terminating at a node with sensitive value $a$. Similarly, denote by $\beta{[a, a]}$ the average probability for a $K$-step random walk starting at a node with sensitive value $a$ and terminating at a node with the same sensitive value.

Following the idea in logit matching~\cite{DBLP:conf/ijcai/ShinYC19}, an empirical counterpart of the regularization term $\mathcal{L}_{\text{dp}}$ in Equation~\ref{eq:fair_loss} is $\hat{\mathcal{L}}_{\text{dp}} = \sum_{a \in \mathcal{A}} \left\lVert \frac{\sum_{v_i \in \mathcal{V}_a} \mathbf{p}_{v_i} }{|\mathcal{V}_a|} - \frac{\sum_{v_j \in \mathcal{V}_{\neq a}} \mathbf{p}_{v_j} }{|\mathcal{V}_{\neq a}|} \right\rVert_2,$
where a logit $\mathbf{p}_v = \mathbf{h}_{v}^{K}\mathbf{W}$ is the input vector to the softmax function of a graph neural network. $\hat{\mathcal{L}}_{\text{dp}}$ measures the disparity between the prediction logits of sensitive groups.

Next, we show that we can reduce the upper bound of the demographic parity regularizer $\hat{\mathcal{L}}_{\text{dp}}$ on GCNs by modifying the connections in the input graph. Thus, we may train GCNs with better demographic parity on the modified graph. 

\begin{theorem}
Denote by $\mathbf{\mu}_{a} = \frac{1}{|\mathcal{V}_{a}|}\sum_{v \in \mathcal{V}_{a}}\mathbf{x}_v$ the mean of the node feature vectors in a group $\mathcal{V}_{a}$ and by $dev(\mathcal{V}_a) = \max_{v \in \mathcal{V}_a}\{\| v - \mu_{a}\|_{\infty}\} $ the deviation. Let $\delta_a=\max\{dev(\mathcal{V}_a), dev(\mathcal{V}_{\neq a})\}$. For any SGCN with parameters $\mathbf{W}$, an upper bound of the regularization term $\hat{\mathcal{L}}_{\text{dp}}$ is
\footnotesize{
\begin{equation}
    \hat{\mathcal{L}}_{\text{dp}} 
    \leq
\sum_{a \in \mathcal{A}} \lVert \mathbf{W} \rVert_2  (|\beta{[a, a]} - \beta{[\neq a, a]}| \cdot 
\lVert \mu_{a} - \mu_{\neq a} \rVert_2 + 2 \sqrt{d} \sum_{a \in \mathcal{A}} \delta_{a}).
\label{eq:dp_upper_bound}
\end{equation}
}
\label{proposition}
\end{theorem}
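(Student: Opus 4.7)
The plan is to exploit the linearity of Simple GCNs so that each logit becomes a random-walk weighted combination of the input feature vectors, and then to decompose the group-averaged logits into (i) a leading term that depends only on the group means $\mu_a,\mu_{\neq a}$ and on the aggregate random-walk probabilities $\beta[\cdot,\cdot]$, and (ii) a residual term controlled by the within-group deviations $\delta_a$. Applying the norm triangle inequality together with the submultiplicativity $\|u\mathbf{W}\|_2\le\|u\|_2\|\mathbf{W}\|_2$ will then separate the two contributions and yield the stated bound.

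Concretely, the first move is to drop the non-linearity in Equation~\ref{eq:vanilla_emebdding} and iterate the recursion $K$ times; this collapses the $K$-layer SGCN into $\mathbf{p}_v=\sum_{j}\tilde{\mathbf{A}}[v,j]\,\mathbf{x}_j\,\mathbf{W}$, where $\mathbf{W}$ absorbs the product of the layerwise transformation matrices and the classifier weights. I would then split the inner sum by the sensitive group of $v_j$. Writing $\beta_v[a']=\sum_{v_j\in\mathcal{V}_{a'}}\tilde{\mathbf{A}}[v,j]$ and $\mathbf{x}_j=\mu_{s_j}+(\mathbf{x}_j-\mu_{s_j})$ yields $\sum_{v_j\in\mathcal{V}_{a'}}\tilde{\mathbf{A}}[v,j]\mathbf{x}_j=\beta_v[a']\mu_{a'}+\mathbf{r}_{v,a'}$, where $\|\mathbf{r}_{v,a'}\|_\infty\le\beta_v[a']\,dev(\mathcal{V}_{a'})$. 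Averaging over $\mathcal{V}_a$ and invoking the definition of $\beta[a,a']$ gives $\bar{\mathbf{p}}_a=\bigl(\beta[a,a]\mu_a+\beta[a,\neq a]\mu_{\neq a}+\mathbf{R}_a\bigr)\mathbf{W}$, and analogously for $\bar{\mathbf{p}}_{\neq a}$. Because rows of $\tilde{\mathbf{A}}$ sum to one, $\beta[\cdot,a]+\beta[\cdot,\neq a]=1$, which forces the $\mu_{\neq a}$ coefficient to collapse symmetrically with the $\mu_a$ coefficient, leaving $\bar{\mathbf{p}}_a-\bar{\mathbf{p}}_{\neq a}=\bigl((\beta[a,a]-\beta[\neq a,a])(\mu_a-\mu_{\neq a})+\mathbf{R}_a-\mathbf{R}_{\neq a}\bigr)\mathbf{W}$.

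The final step is to bound the residual. Combining $\|\mathbf{r}_{v,a'}\|_\infty\le\delta_a\beta_v[a']$ with the row-sum identity gives $\|\mathbf{R}_a\|_\infty\le\delta_a$ and $\|\mathbf{R}_{\neq a}\|_\infty\le\delta_a$, hence $\|\mathbf{R}_a-\mathbf{R}_{\neq a}\|_\infty\le 2\delta_a$; converting via $\|\cdot\|_2\le\sqrt{d}\|\cdot\|_\infty$ upgrades this to $2\sqrt{d}\,\delta_a$. Taking the $\ell_2$ norm of $\bar{\mathbf{p}}_a-\bar{\mathbf{p}}_{\neq a}$, pulling out $\|\mathbf{W}\|_2$, and summing over $a\in\mathcal{A}$ delivers the claimed inequality (using $\delta_a\le\sum_{a'\in\mathcal{A}}\delta_{a'}$ to recover the exact form printed in the statement). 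The main obstacle I anticipate is keeping three bookkeeping layers aligned — random-walk partitioning by target group, mean-versus-deviation decomposition of features, and $\ell_\infty\leftrightarrow\ell_2$ conversion — so that the cancellation enabled by $\beta[\cdot,a]+\beta[\cdot,\neq a]=1$ surfaces cleanly; losing that identity is what would inflate the leading term and prevent the bound from factoring into the advertised "mean-gap $\times$ imbalance-gap" plus "deviation" shape.
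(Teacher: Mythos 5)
Your proposal is correct and follows essentially the same route as the paper's proof: linearize the SGCN so each logit is a random-walk average of features, decompose features into group mean plus deviation, use the row-stochasticity of $\tilde{\mathbf{A}}$ to collapse the mean coefficients into $(\beta[a,a]-\beta[\neq a,a])(\mu_a-\mu_{\neq a})$, bound the residual in $\ell_\infty$ by $2\delta_a$ and convert to $\ell_2$ via $\sqrt{d}$, then pull out $\lVert\mathbf{W}\rVert_2$. Your explicit residual bookkeeping via $\mathbf{r}_{v,a'}$ and $\mathbf{R}_a$ is just a more concrete rendering of the paper's bracket-range notation, and your observation that $\delta_a\le\sum_{a'}\delta_{a'}$ is needed to match the printed (looser) form of the bound is accurate.
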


\begin{proof}
$\hat{\mathcal{L}}_{\text{dp}} 
    = 
    \sum_{a \in \mathcal{A}} \left\lVert \frac{\sum_{v_i \in \mathcal{V}_a} \mathbf{p}_{v_i} }{|\mathcal{V}_a|} - \frac{\sum_{v_j \in \mathcal{V}_{\neq a}} \mathbf{p}_{v_j} }{|\mathcal{V}_{\neq a}|} \right\rVert_2$ by definition.
Wu \textit{et al.}~\cite{DBLP:conf/icml/WuSZFYW19} show that the prediction of a $K$-layer SGCN on a node $v$ is 
$\hat{y}_{v} = \sigma( \mathbf{h}_v \mathbf{W})$, where $\mathbf{h}_v =  \sum_{u \in \mathcal{V}} \tilde{\mathbf{A}}[v, u] \mathbf{x}_u$.
Plugging in the definition of $p_i = \mathbf{h}_{v_i} \mathbf{W}$,  the term $\hat{\mathcal{L}}_{\text{dp}}^a = \lVert \frac{\sum_{v_i \in \mathcal{V}_a} p_i}{|\mathcal{V}_a|} - \frac{\sum_{v_j \in \mathcal{V}_{\neq a}} p_j}{|\mathcal{V}_{\neq a}|} \rVert_2$ in $\hat{\mathcal{L}}_{\text{dp}}$ can be rewritten as
$\hat{\mathcal{L}}_{\text{dp}}^a 
=
\lVert 
\frac{\sum_{v_i \in \mathcal{V}_a} \mathbf{h}_{v_i} \mathbf{W}}{|\mathcal{V}_a|} - \frac{\sum_{v_j \in \mathcal{V}_{\neq a}} \mathbf{h}_{v_j} \mathbf{W}}{|\mathcal{V}_{\neq a}|} 
\rVert_2$. 
As the Lipschitz constant for a linear function $g(\mathbf{x}) = \mathbf{x} \mathbf{W}$ is the spectral normal of $\mathbf{W}$~\cite{DBLP:conf/iclr/MiyatoKKY18}, we have 
\begin{equation}
    \hat{\mathcal{L}}_{\text{dp}}^a 
\leq 
\lVert \mathbf{W} \rVert_2
\left\lVert 
\frac{\sum_{v_i \in \mathcal{V}_a} \mathbf{h}_{v_i}}{|\mathcal{V}_a|} - \frac{\sum_{v_j \in \mathcal{V}_{\neq a}} \mathbf{h}_{v_j}}{|\mathcal{V}_{\neq a}|} 
\right\rVert_2.
\label{eq:ldp_temp_upper_bound}
\end{equation}

For simplicity, we use the bracket notation to represent the range of a vector\footnote{Consider three vectors $\mathbf{x}_1$, $\mathbf{x}_2$, and  $\mathbf{x}_3$ with the same dimensionality $d$. If $\mathbf{x}_1$ satisfies that, for $i \in [1, d]$, $ \mathbf{x}_2[i] - \mathbf{x}_3[i] \leq \mathbf{x}_1[i] \leq  \mathbf{x}_2[i] + \mathbf{x}_3[i]$, we write $\mathbf{x}_1 \in [\mathbf{x}_2 \pm \mathbf{x}_3]$. }. 
By the definition of $\delta_a$, for a node $v \in \mathcal{V}_a$, we have $\mathbf{x}_{v} \in [\mathbf{\mu}_{a} \pm \delta_a * \mathbbm{1}]$, where $\mathbbm{1} \in \mathbb{R}^{d}$ is an all one vector. 
By the definition of $\mathbf{h}_{v}$, we have
$\mathbf{h}_{v_{i}} \in 
[ \sum_{v_j \in \mathcal{V}_{a} } \tilde{\mathbf{A}}[i, j] \mathbf{\mu}_a + \sum_{v_j \in \mathcal{V}_{\neq a} } \tilde{\mathbf{A}}[i, j]  \mathbf{\mu}_{\neq a} \pm \delta_{a} * \mathbbm{1}]$. Thus, the mean node embedding
$\frac{\sum_{v_i \in \mathcal{V}_{a}} \mathbf{h}_{v_i}}{|\mathcal{V}_a|} 
\in
[
\beta{[a, a]} \mu_{a} + \beta{[a, \neq a]} \mu_{\neq a} \pm \delta_{a} * \mathbbm{1}
]$. 
Similarly, we have the mean node embedding
$\frac{\sum_{v_i \in \mathcal{V}_{\neq a}} \mathbf{h}_{v_i}}{|\mathcal{V}_{\neq a}|} 
\in
[
\beta[\neq a, a] \mu_{a} + \beta[\neq a, \neq a] \mu_{\neq a} \pm \delta_a * \mathbbm{1}
]
$. Therefore, we have $
\frac{\sum_{v_i \in \mathcal{V}_{a}} \mathbf{h}_i^K}{|\mathcal{V}_a|}  - \frac{\sum_{v_i \in \mathcal{V}_{\neq a}} \mathbf{h}_i^K}{|\mathcal{V}_{\neq a}|}  
\in 
[
(\beta[a, a] - \beta[\neq a, a]) \mu_{a} + (\beta[a, \neq a] - \beta[\neq a, \neq a]) \mu_{\neq a} \pm 2\delta_{a} * \mathbbm{1}
]
$.
Since $\beta[a, \neq a] = 1 - \beta[a, a]$ and $\beta[\neq a, \neq a] = 1 - \beta[\neq a, a]$, the vector range can be written as 
$
\frac{\sum_{v_i \in \mathcal{V}_{a}} \mathbf{h}_i^K}{|\mathcal{V}_a|}  - \frac{\sum_{v_i \in \mathcal{V}_{\neq a}} \mathbf{h}_i^K}{|\mathcal{V}_{\neq a}|}  
\in 
[
(\beta[a, a] - \beta[\neq a, a])(\mu_{a} - \mu_{\neq a}) \pm 2\delta_a * \mathbbm{1}
]
$.
Thus, 
$\lVert \frac{\sum_{v_i \in \mathcal{V}_{a}} \mathbf{h}_i^K}{|\mathcal{V}_a|}  - \frac{\sum_{v_i \in \mathcal{V}_{\neq a}} \mathbf{h}_i^K}{|\mathcal{V}_{\neq a}|} \rVert_2
\leq
|\beta[a, a] - \beta[\neq a, a]| * 
\lVert \mu_{a} - \mu_{\neq a} \rVert_2 + 2\sqrt{d} \delta_a
$.
Plugging the above inequality into Equation~\ref{eq:ldp_temp_upper_bound}, we have $\hat{\mathcal{L}}_{\text{dp}}^{a} 
\leq 
\lVert \mathbf{W} \rVert_2 * (|\beta[a, a] - \beta[\neq a, a]| * 
\lVert \mu_{a} - \mu_{\neq a} \rVert_2 + 2\sqrt{d} \delta_a)$. 

Plugging in the upper bound of each term in $\hat{\mathcal{L}}_{\text{dp}}$, we have Equation~\ref{eq:dp_upper_bound}.
\end{proof}

Equation~\ref{eq:dp_upper_bound} indicates three sources of poor demographic parity of GCNs, namely, node attribute bias $\lVert \mu_{a} - \mu_{\neq a} \rVert_2$, graph structure bias $|\beta[a, a] - \beta[\neq a, a]|$, and model parameters $\mathbf{W}$. 
In the literature, some techniques have been developed to reduce node attribute bias and model parameter bias~\cite{DBLP:journals/kais/KamiranC11, DBLP:conf/nips/CalmonWVRV17, DBLP:conf/ijcai/ManishaG20, DBLP:conf/wsdm/DaiW21}.
In addition to debiasing node features and GCN parameters, Equation~\ref{eq:dp_upper_bound} suggests that we can further improve the demographic parity of a GCN by mitigating the graph structure bias.
In particular, we can regulate the graph structure such that $\beta{[a, a]} = \beta{[\neq a, a]}$.

A solution that eliminates graph structure biases is to process the graph structures by adding and removing edges so that each node $v$ has a \textit{balanced neighborhood}, that is, $\Gamma_v$ has the same number of nodes from each sensitive group. Formally, $\forall a_{i}, a_{j} \in \mathcal{A}$, we want $|\Gamma_{v}  \cap \mathcal{V}_{a_{i}}| = |\Gamma_{v} \cap \mathcal{V}_{a_{j}}|$. 

\begin{figure*}[t]
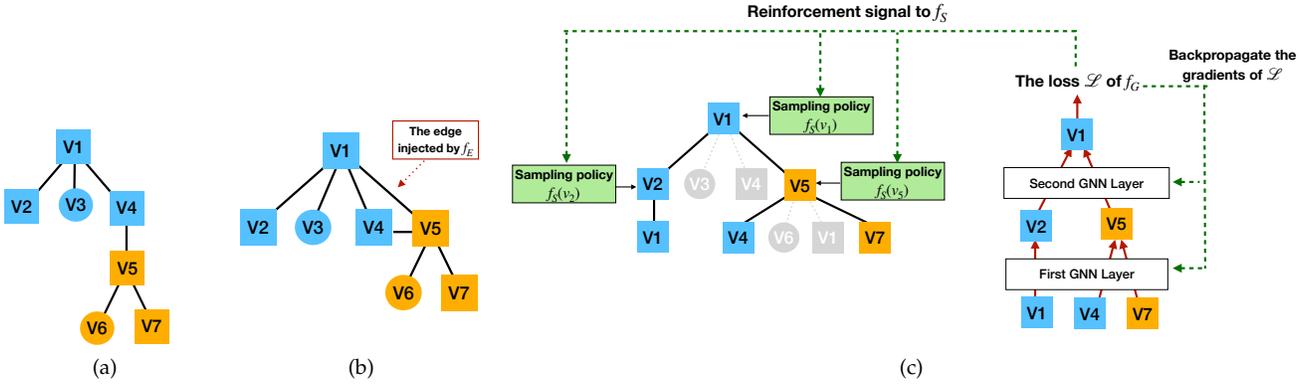

\centering
\subfloat[]{
\includegraphics[page=2, width=0.16\linewidth, scale=0.3, trim=0 650 1600 0, clip]{Figure/fairness_figures.pdf}
\label{figure:origin_graph}
}
\subfloat[]{
\includegraphics[page=3, width=0.194\linewidth, scale=0.3, trim=0 500 1450 0, clip]{Figure/fairness_figures.pdf}
\label{figure:add_edges}
}
\subfloat[]{
\includegraphics[page=4, width=0.589\linewidth, scale=0.3, trim=0 400 500 0, clip]{Figure/fairness_figures.pdf}
\label{figure:sample_edges}
}
\caption{An example showing the intuition of the FairSample approach. (a) An input graph to FairSample.  (b) The augmented graph after injecting an inter-group edge. (c) Jointly train the sampling policy $f_{S}$ and the 2-layer GCN node classifier $f_{G}$ with the computation graph of a node $v_1$. }
\label{fig:framework}
\end{figure*}

A remaining question is which edges should be changed.  A straightforward strategy is to pre-process the input graph $\mathcal{G}$ by randomly adding and removing edges so that each node has a balanced neighborhood. This method, however, may not work well, as $\mathcal{G}$ is modified without considering the utility for training GCNs.
As a result, too many noisy edges may be introduced into $\mathcal{G}$ and the GCNs trained on the changed graph may not be useful~\cite{DBLP:conf/kdd/Dai0W21, DBLP:conf/aaai/0003LNW0S21}. Thus, an ideal method for graph structure bias reduction should alter the edges in a way that minimizes $|\beta[a,a] - \beta[\neq a, a]|$ and maintains the prediction power of GCNs built on the modified graph. This is our major insight driving the development of our method in this paper.

Based on the above intuition, we define the problem of \emph{learning a demographic parity fairness-aware sampling strategy for GCNs}.
Given a graph $\mathcal{G} = (\mathcal{V}, \mathcal{E}, \mathbf{X})$, where there is a sensitive attribute $\mathcal{A} \in \mathcal{X}$ and the sensitive values of the nodes are $\mathbf{S} = \{s_1, \ldots, s_n\}$, and a set of labeled node $\mathcal{V}_{L} \subseteq \mathcal{V}$ with corresponding binary labels $Y_L$, we aim to design a sampling strategy that can mitigate graph structure bias and learn a fair graph convolutional neural network
$
    f(\mathcal{G}, \mathbf{S}, \mathcal{V}_{L}, Y_L) \rightarrow \hat{Y}_U,
$
where $\hat{Y}_U$ is the set of predicted labels of unlabeled nodes $U=\mathcal{V} \setminus \mathcal{V}_L$. 
The learned GCN should have high accuracy and good demographic parity.

\section{FairSample: Sampling for Fair GCNs}
\label{sec:proposed_method}
In this section, motivated by the analysis in Section~\ref{sec:problem_formulation}, we develop FairSample, a two-phase framework for training demographic parity fair GCNs and retaining scalability and accuracy.

\subsection{The Framework of FairSample}
\label{sec:framework}

A graph is \emph{homophilic} with respect to the sensitive values if the nodes with the same sensitive value are more likely to be connected.
In this paper, we assume homophilic input graphs, which are widely observed in real world social networks~\cite{DBLP:conf/wsdm/DaiW21, DBLP:conf/aistats/LaclauRCL21, DBLP:conf/ijcai/RahmanS0019}. 
Our method can be easily adapted to heterophilic graphs as discussed at the end of this subsection, where the nodes with the same sensitive value are less likely to be connected.

Our analysis in Section~\ref{sec:analyze} suggests that we can strengthen demographic parity in node classification by learning node embeddings with balanced neighborhoods.  Dai~\textit{et~al.}~\cite{DBLP:conf/kdd/Dai0W21} suggest that the accuracy in node classification can be improved by aggregating embeddings from informative neighbors. A neighbor node $u$ of a node $v$ is \emph{informative} with respect to $v$, if $u$ and $v$ have similar node feature vectors~\cite{DBLP:conf/kdd/Dai0W21, DBLP:conf/kdd/YoonGSNHY21}. Based on the two intuitions, our framework trains a fair and accurate GCN by modifying the computation graphs in training a GCN so that each node aggregates embeddings from a set of balanced and informative neighbors.  

There are two major challenges. First, we have to handle nodes with dominating intra-group connections, which may lead to bias in graph structure. \textit{Intra-group connections} are edges between the nodes with the same sensitive values. \textit{Inter-group connections} are edges between the nodes with different sensitive values. If all neighbors of a node $v$ have the same sensitive values, there is no way for $v$ to aggregate embeddings from diverse sensitive groups. Second, we need to sample the edges for embedding aggregation in GCN training so that we can learn GCNs with good accuracy, demographic parity, and scalability. An edge sampling method not carefully designed may not help with either model accuracy~\cite{DBLP:conf/kdd/Dai0W21, DBLP:conf/aaai/0003LNW0S21} or model demographic parity~\cite{DBLP:conf/iclr/LiWZHL21, DBLP:conf/aistats/LaclauRCL21}.

FairSample tackles the two challenges in a two-phase framework. In phase 1, FairSample uses an edge injector $f_{E}$ to augment the input graph $\mathcal{G}$ by adding inter-group edges to improve the inter-group connectivity of the graph. As the injected edges may introduce noise into the data, the accuracy of the GCN classifier $f_{G}$ built on the modified graph may be hurt~\cite{DBLP:conf/kdd/Dai0W21}. To alleviate the problem, we only connect similar nodes with the same class labels.  As justified by Dai~\textit{et~al.}~\cite{DBLP:conf/kdd/Dai0W21}, adding such connections can benefit the accuracy of GCNs. Phase 1 improves the heterophilic neighborhood for nodes with dominated intra-group connections. 

\begin{example}[Phase 1]\rm
Consider the input graph in Figure~\ref{figure:origin_graph}, where each node has a binary sensitive value (represented by the colors of the nodes) and a binary class label (represented by the shapes of the nodes). Node $v_5$ and node $v_1$ are similar nodes with the same class label, since the distance between the two nodes in the graph is short, only $2$.  At the same time, they have different sensitive values. Thus, in Phase 1, FairSample may connect the two nodes and obtain the graph shown in Figure~\ref{figure:add_edges}.
\end{example}

In phase 2, FairSample uses reinforcement learning to learn a computation graph sampling policy $f_{S}$. The policy samples the edges in the computation graphs for training $f_{G}$ such that the loss of fairness and utility (Equation~\ref{eq:fair_loss}) of $f_{G}$ can be minimized. 
FairSamlpe jointly learns $f_{G}$ and $f_{S}$ on the graph augmented by $f_{E}$. 

\begin{example}[Phase 2]\rm
Figure~\ref{figure:sample_edges} shows the training process of $f_{S}$ and a 2-layer GCN $f_{G}$ using the augmented input graph in Figure~\ref{figure:add_edges}, where the left part shows the sampling operations on the computation graph $\mathcal{T}$ of $v_1$ and the right part shows the computation of the loss function $\mathcal{L}$ of $f_{G}$ following the sampled computation graph.
The policy $f_{S}$ samples $\mathcal{T}$ in a top-down breadth-first manner. Up to 2 child nodes per node in $\mathcal{T}$ are sampled.
First, the child nodes $v_2$ and $v_5$ of the root node $v_1$ are sampled.
Then, $f_{S}$ iteratively samples the child nodes of $v_2$ and $v_5$ to obtain the sampled computation graph $\mathcal{T}'$. The classifier $f_{G}$ computes the loss function $\mathcal{L}$ following $\mathcal{T}'$ and optimizes its parameters using the backpropagation. FairSample uses the value of $\mathcal{L}$ as the reinforcement signal to train the policy $f_{S}$ to minimize $\mathcal{L}$.

Through this reinforcement learning process, $f_{S}$ helps each node in $\mathcal{T}'$ to aggregate embeddings from a set of neighbors that can improve the demographic parity and accuracy of $f_{G}$.
\label{example:phase2}

\end{example}


To adapt FairSample to heterophilic input graphs, we can use the edge injector $f_{E}$ to add intra-group edges and enhance the graph's intra-group connectivity.

\subsection{Edge Injector}
\label{sec:edge_injector}

Theorem~\ref{proposition} suggests that injecting inter-group edges into the input graph may improve the demographic parity of the GCN built on the modified graph. 
To improve the demographic parity and retain the prediction quality of the GCN being trained, we pre-process the input graph by adding inter-group edges connecting similar nodes with the same class labels.  A challenge in implementing the strategy is that we only know the labels of $\mathcal{V}_{L}$, which usually only contains a very small number of nodes. As a result, we can only inject edges between a small number of nodes and the benefits from our edge injection strategy may be severely limited. To tackle the challenge of label sparsity, we can add accurate pseudo labels to some of the unlabeled nodes~\cite{DBLP:conf/kdd/Dai0W21, DBLP:conf/cvpr/IscenTAC19}. Let $\mathcal{V}'_{L} = \mathcal{V}_L \cup \mathcal{V}_P$ be the augmented set of labeled nodes, where $\mathcal{V}_P$ is the set of nodes with pseudo labels. With more labeled and pseudo-labeled nodes, our edge injection strategy can create more inter-group edges to better mitigate graph structure bias.

We use a GCN as the pseudo label predictor, which utilizes both node features and graph structures to make accurate label predictions. To obtain accurate pseudo labels, we follow the existing work~\cite{DBLP:conf/kdd/Dai0W21, DBLP:conf/cvpr/IscenTAC19} and only keep the pseudo labels with confidence scores larger than a threshold $\tau$. Our framework can take any GNN models as the pseudo label predictor.

Next, we introduce our edge injection strategy. 
As two nodes with a small shortest-path distance in the graph are more similar~\cite{DBLP:conf/kdd/YenSMS08}, we limit that two nodes $v_i, v_j \in \mathcal{V}'_L$ are connected by $f_{E}$ only if their shortest-path distance is smaller than or equal to $h$.
Specifically, our edge injector proceeds as follows. 

Let $C_v$ be the set of nodes in the $h$-hop neighborhood of $v$ that have the same class labels as $v$ but different sensitive values from $v$. 
For each node $v \in \mathcal{V}'_L$,
$f_{E}$ uniformly randomly picks at most $\min\{m, |C_v|\}$ nodes in $C_v$ and connects them with $v$. We set $h$ to 2 in this work to reduce the computational cost for searching $C_v$ and avoid the over-smooth issue~\cite{DBLP:conf/icml/ChenWHDL20, DBLP:conf/kdd/LiuGJ20} that may significantly degrade the accuracy of GCNs. 
As $f_{E}$ injects inter-group edges in a heuristic manner, it may introduce noisy edges into the data and negatively affect the utility of the GCNs being trained.
However, as we apply the computation graph sampler $f_{S}$ to pick informative neighbors for GCN training, the noisy edges may not be used and thus their negative effect on the accuracy of the GCN may be reduced. 
This is because the sampler is trained to optimize the loss of demographic parity and utility of the target GCN classifier. If an edge significantly degrades the utility of the target GCN classifier, that edge may not be selected by the sampler.
Our empirical study in Section~\ref{exp:ablation} shows that the edge injection strategy can improve the fairness of the trained GCN classifiers without sacrificing accuracy in most cases.

\subsection{Computation Graph Sampler}
\label{sec:computation_graph_sampler}
Now, we introduce our computation graph sampler $f_{S}$ for training a fair, accurate, and scalable GCN $f_{G}$. 
The sampler $f_{S}$ is learned by reinforcement learning to optimize the loss $\mathcal{L}$ of fairness and utility of $f_{G}$.

An ideal computation graph sampling method should improve the demographic parity of the target GCN classifier $f_{G}$ and maintain the utility of $f_{G}$. Stratified node sampling  with respect to the sensitive values can select a balanced set of nodes for feature aggregation, which benefits the demographic parity of $f_{G}$ according to Theorem~\ref{proposition}.
This method,
however, may not train an accurate $f_{G}$, as it does not consider the informativeness of the nodes. Node similarity sampling for feature aggregation, on the one hand, can benefit the accuracy of $f_{G}$~\cite{DBLP:conf/aaai/0003LNW0S21, DBLP:conf/kdd/YoonGSNHY21} and, on the other hand, cannot improve the demographic parity of $f_{G}$, as it cannot reduce the structure bias of the input graph. 

To train GCNs with good demographic parity and high accuracy, we propose a computation graph sampler $f_{S}$, which leverages reinforcement learning to combine the sampling results from the stratified sampling method and the similarity sampling method. Our key idea in $f_{S}$ is to sample the computation graphs $\mathcal{T}$ in the GCN $f_{G}$ such that each node in the sampled computation graph aggregates embeddings from a balanced and informative set of child nodes. Specifically, $f_{S}$ down-samples $\mathcal{T}$ in a top-down breadth-first manner. The sampler $f_{S}$ starts from the root node $v$ of $\mathcal{T}$. Following the sampling framework by Yoon~\textit{et~al.}~\cite{DBLP:conf/kdd/YoonGSNHY21}, $f_{S}$ samples $k$ child nodes of $v$ with replacement and discards the duplicate sampled nodes to get a set of up to $k$ sampled child nodes $p(v)$.  A child node $v_i$ of $v$ is sampled following the probability $\text{P}(v_i| v)$, which is a parameterized function taking the feature vectors of $v_i$ and $v$ as input. 
Then, for each $v_i \in p(v)$, $f_{S}$ samples a set of child nodes $p(v_i)$ by repeating the above steps. The sampler $f_{S}$ iteratively operates on each sampled node in $\mathcal{T}$ until reaching the leaves of $\mathcal{T}$.  We use sampling with replacement here to ensure independence among samples.

After getting the sampled computation graphs $\mathcal{T}'$, we apply the classifier $f_{G}$ to compute node predictions following $\mathcal{T}'$ and calculate the loss function $\mathcal{L}$ correspondingly. Then, we update the parameters of $f_{S}$ and $f_{G}$ to minimize $\mathcal{L}$. We update the parameters of $f_{G}$ using backpropagation.
Since the sampling operation is non-differentiable~\cite{huangAdaptiveSamplingFast2018}, we cannot directly train the parameters of $f_{S}$ using the gradients of $\mathcal{L}$. 
To address this challenge, FairSample uses the log derivative technique in reinforcement learning~\cite{DBLP:journals/ftml/Francois-LavetH18} to train sampling policies.

The probability $\text{P}(v_j | v_i)$ of sampling a child node $v_j$ of  $v_i$ is computed based on two components, node similarity sampling $q_{\text{sim}}(v_j | v_i)$ and stratified sampling $q_{\text{fair}}(v_j | v_i)$. 
The component $q_{\text{sim}}(v_j | v_i) = (\mathbf{x}_i \mathbf{W}_{s}) \cdot (\mathbf{x}_j \mathbf{W}_{s})$ is designed to improve model accuracy, where $\mathbf{W}_{s} \in \mathbb{R}^{d \times d_s}$ is a feature transformation matrix and $d_s$ is the dimensionality of the transformed features. It assigns higher probabilities to the child nodes that have similar features as $v_i$, and thus helps to sample informative child nodes that can benefit the accuracy of GCNs~\cite{DBLP:conf/aaai/0003LNW0S21}.
The component $q_{\text{fair}}(v_j | v_i) = \frac{1}{|\Gamma_{v_i} \cap \mathcal{V}_{s_j}|}$ is designed to improve model fairness. It assigns higher probabilities to the minority sensitive groups in the child nodes $\Gamma_{v_i}$ of $v_i$, and thus helps to sample a set of child nodes with balanced sensitive values.
An attention mechanism is applied to perform a tradeoff between the two components. Specifically, we define $\text{P}(v_j | v_i) = \frac{q(v_j | v_i)}{\sum_{v \in \Gamma_{v_i}} q(v | v_i)}$, where $q(v_j | v_i) = \mathbf{a} \cdot \langle q_{\text{sim}}(v_j | v_i), q_{\text{fair}}(v_j | v_i) \rangle$ and $\mathbf{a} \in \mathbb{R}^2$ is an attention vector. 
The attention vector $\mathbf{a}$ learns which component is more effective on optimizing the loss $\mathcal{L}$ of $f_{G}$.
Both $\mathbf{W}_{s}$ and $\mathbf{a}$ are learnable parameters of our sampling policy and are shared by all nodes in $\mathcal{V}$.

FairSample borrows the training mechanism by Yoon~\textit{et~al.}~\cite{DBLP:conf/kdd/YoonGSNHY21} to learn the parameters $\theta = (\mathbf{W}_{s}, \mathbf{a})$ of $f_{S}$.
The gradient of $\mathcal{L}$ with respect to $\theta$ is estimated as $\nabla_{\theta} \mathcal{L} \approx \frac{d}{d \theta}( \frac{1}{|\mathcal{T}_1'|} \sum_{v_i \in \mathcal{T}'_1} \frac{d \mathcal{L}}{d \mathbf{h}^{1}_{v_i}} \mathbb{E}_{v_j \in p(v_i)}[\log \text{P}(v_j | v_i) \mathbf{x}_j]),$
where $\mathcal{T}'_1$ is the set of nodes in the first layer of the sampled computation graph $\mathcal{T}'$, $\mathbf{h}^{1}_{v_i}$ is the embedding of $v_i$ following $\mathcal{T}'$, and $p(v_i)$ is the set of child nodes of $v_i$ in $\mathcal{T}'$.

\begin{example}\rm
Continue the training process of $f_{S}$ in Figure~\ref{figure:sample_edges}, which is discussed in Example~\ref{example:phase2}.  Among the child nodes of the root node $v_1$, $v_5$ has the largest sampling probability, as $v_5$ not only has features similar to those of $v_1$ (large $q_{sim}(v_5|v_1)$) but also belongs to the minority sensitive group in $\Gamma_{v_1}$ (large $q_{\text{fair}}(v_5 | v_1)$). 
\label{exp:sampler}
\end{example}

The optimization framework~\cite{DBLP:conf/kdd/YoonGSNHY21} utilized by FairSample is principled and general. It can readily accommodate various definitions of $\mathcal{L}_{dp}$ as long as $\mathcal{L}_{dp}$ is differentiable, such as the many proposed in literature~\cite{DBLP:conf/wsdm/DaiW21, DBLP:conf/iclr/LiWZHL21, DBLP:conf/ijcai/ManishaG20, DBLP:conf/icml/BoseH19}. In literature, some definitions of $\mathcal{L}_{dp}$ are based on the adversarial learning framework~\cite{DBLP:conf/wsdm/DaiW21, DBLP:conf/icml/BoseH19} and are prone to be unstable during training~\cite{DBLP:conf/aies/BeutelCDQWLKBC19}. 
In our experiments, we choose the well-adopted definition~\cite{DBLP:conf/ijcai/ManishaG20}, that is, $$\mathcal{L}_{dp}=\sum_{a \in \mathcal{A}} \left| \frac{\sum_{v_i \in \mathcal{V}_a} f(v_i) }{|\mathcal{V}_a|} - \frac{\sum_{v_j \in \mathcal{V}_{\neq a}} f(v_j) }{|\mathcal{V}_{\neq a}|} \right|,$$ where $f(v_i)$ is the predicted probability of the positive label, and $\mathcal{V}_{a_i}$ and $\mathcal{V}_{\neq a_i}$ are the sets of nodes taking and not taking the sensitive value $a_i$, respectively. The adopted $\mathcal{L}_{dp}$ is stable during training, efficient to compute, and empirically well-performed.

\section{Empirical Study}
\label{sec:experiments}

In this section, we evaluate the performance of our FairSample framework for training demographic parity fair GCN classifiers and compare with the state-of-the-art baselines. 

\subsection{Datasets and Experiment Settings}

\subsubsection{Dataset}
We conducted experiments using a LinkedIn production dataset and five public fair GCN benchmark datasets~\cite{DBLP:conf/wsdm/DaiW21}. Table~\ref{tbl:datasets} shows some statistics of the datasets. 

\begin{table}[t]\smaller
\centering
\caption{Some statistics of the datasets. ``Intra. ratio'', ``Sens.'', and ``Nation.'' are short for ``Intra-group edge ratio'', ``Sensitive attribute'', and ``Nationality'', respectively.}
\begin{tabular}{|c|c|c|c|c|c|c|}
\hline
\textbf{Dataset} & 
NBA & 
PNL & 
PZL &
PNG &
PZG &
LNKD \\ \hline

\textbf{Nodes} &
403   & 
66,569 & 
67,797 & 
66,569 & 
67,797 & 
40,000 \\ \hline

\textbf{Edges} & 
16,570 & 
729,129 & 
882,765 & 
729,129 & 
882,765 & 
6,002,612  \\ \hline

\textbf{\makecell{Intra. \\ ratio}} & 
73\% & 
95\% & 
96\% & 
47\% & 
46\% & 
73\% \\ \hline

\textbf{\makecell{Sens.}} &
Nation. &
\makecell{Living\\ region} &
\makecell{Living\\ region} & 
Gender & 
Gender & 
\makecell{Level} 
\\ \hline
\textbf{\makecell{Task}} & 
\makecell{Income \\ level} &
Job & 
Job & 
Job & 
Job & 
Industry \\ \hline

\end{tabular}
\label{tbl:datasets}
\end{table}

The Pokec-n and Pokec-z datasets are subnetworks of a social network Pokec~\cite{takac2012data} in Slovakia, where the nodes are users and the edges are their connections. Pokec-n and Pokec-z are created by sampling the users from two provinces, Nitriansky and Zilinsky, respectively.
Each dataset consists of the users from the two major living regions of the corresponding province. The feature vector of a node is generated based on the corresponding user's profile, including living region, gender, spoken language, age, etc.
We use living region and gender as the sensitive attributes and conduct separate experiments for each. Both living region and gender are binary variables. The living region represents one of the two major living regions within each corresponding province. 
Denote by PNL and PZL the Pokec-n and Pokec-z datasets with living region as the sensitive attribute, respectively.
Similarly, we denote the Pokec-n and Pokec-z datasets with gender as the sensitive attribute as PNG and PZG, respectively. As shown in Table~\ref{tbl:datasets}, PNL and PZL datasets are homophilic, as there are more intra-group edges than inter-group edges in the graphs. Similarly, PNG and PZG datasets are heterophilic.
The classification task is to predict whether a user is a student. 

The \textbf{NBA} dataset consists of 403 NBA players as the nodes and their relationships on Twitter as the edges. The binary sensitive attribute of a player is whether a player is a U.S. or oversea player. The classification task is to predict whether the salary of a player is over the median. The feature vector of a node is generated based on the corresponding player's profile, including performance statistics, height, age, etc. The input graph is homophilic.

The \textbf{LNKD} dataset is a subset of the LinkedIn social networks where the nodes are  alumni from a US university, and the edges are the connections between them. The binary sensitive attribute of a user is the user's seniority level, high or low. The classification task is to predict whether a user is suitable for working in the ``Hospital \& Health Care'' industry. The feature vector of a node is generated based on the corresponding user's profile. The dataset is homophilic.

For the five large datasets, we follow the same setting used by Dai~\textit{et~al.}~\cite{DBLP:conf/wsdm/DaiW21}, where we randomly sample 500 labeled nodes as the training set, 25\% of the labeled nodes as the validation set, and 25\% of the labeled nodes as the test set. On the NBA dataset where the total number of nodes is less than 500, we randomly sample 50\% of the labeled nodes as the training set instead. The training, validation, and test sets have no overlaps. We generate $5$ sets of training, validation, and test datasets with different random seeds for each dataset.

\subsubsection{Baselines}
We compare the performance of FairSample (FS for short in the tables and figures) with 12 representative baselines belonging to three categories, namely, graph neural networks (GNNs) without fairness constraints, pre-processing fairness methods, and in-processing fairness methods.

\textbf{GAT}~\cite{DBLP:conf/iclr/VelickovicCCRLB18}, \textbf{DGI}~\cite{DBLP:conf/iclr/VelickovicFHLBH19}, and \textbf{GCA}~\cite{DBLP:conf/www/0001XYLWW21} are famous GNNs without fairness constraints, which only aim to optimize model utility.
GAT learns node classifiers in an end-to-end fashion. DGI and GCA employ contrastive objective functions to learn node embeddings in an unsupervised manner. 
We train a logistic regression on top of the embeddings learned by DGI and GCA, respectively, to predict node labels. 

We include three pre-processing fairness enhancing methods for GNNs. \textbf{FairAdj}~\cite{DBLP:conf/iclr/LiWZHL21} modifies the adjacency matrix of the input graph to enforce fairness constraints. A GNN is used to learn node embeddings on the modified adjacency matrix. 
\textbf{FCGE}~\cite{DBLP:conf/icml/BoseH19} is an adversarial framework to enforce fairness constraints on node embeddings. The embeddings are optimized to confuse a discriminator, which is jointly trained to predict the sensitive attribute value of a node from its embedding. FairAj and FCGE perform node classifications in the same way as DGI. \textbf{EDITS}~\cite{DBLP:conf/www/DongLJL22} mitigates the unfairness in GNNs by producing unbiased training data. EDITS debiases the input graph by altering its adjacency matrix and node attributes. We train a classic two-layer GCN~\cite{DBLP:conf/iclr/KipfW17} on top of the debiased graph to predict node labels.

Last, we include six in-processing fairness enhancing methods. \textbf{GraphSage}~\cite{DBLP:conf/nips/HamiltonYL17} and \textbf{PASS}~\cite{DBLP:conf/kdd/YoonGSNHY21} are celebrated sampling frameworks to efficiently train GCN classifiers. \textbf{Stratified-GraphSage} is an extension of GraphSage by performing stratified sampling within sensitive groups to sample nodes for embedding aggregation. It performs embedding aggregation using a more balanced neighborhood than GraphSage. The probability of sampling a neighbor $v_j$ of a node $v_i$ is set to $\text{P}(v_j | v_i) = \frac{1}{|\mathcal{A}| \cdot |\Gamma_{v_i} \cap \mathcal{V}_{s_j}|}$. 
To enforce fairness in GCN classifiers, the three methods train GCNs by adopting the same demographic parity regularizer as FairSample does. 
GSR, PASSR, and SGSR are short for GraphSage, PASS, and Stratified-GraphSage with the demographic parity regularizer, respectively. \textbf{FairGNN}~\cite{DBLP:conf/wsdm/DaiW21} is a state-of-the-art method for fair node classification tasks under the constraint of demographic parity. FairGNN adopts an adversarial framework~\cite{DBLP:conf/icml/BoseH19} to rectify the final layer representations in the target GNN classifier. 
We test a 2-layer GCN and a 2-layer GAT as the backbones of FairGNN, denoted by FGCN and FGAT, respectively.
\textbf{NIFTY}~\cite{DBLP:conf/uai/AgarwalLZ21} leverages contrastive learning to train GNNs under the constraint of counterfactual fairness. The method generates augmented counterfactual views of the input graph by flipping the node sensitive values. 
A 2-layer \textbf{multi-layer perceptron} (MLP)~\cite{goodfellow2016deep} predicts the node labels using only the node feature vectors. MLP adopts the same demographic parity regularizer as FairSample to enforce fairness.

\subsubsection{Implementation Details and Parameter Settings}

\label{sec:hyperparameter}

All GCN methods in our experiments train 2-layer GCNs with non-linear activation functions.
FairSample adopts a celebrated 2-layer GCN~\cite{DBLP:conf/iclr/KipfW17} as the pseudo label predictor. To be consistent, the dimensionalities of the node embeddings in all methods are set to 64.

\begin{table*}[t]
\centering
\smaller
\caption{The comparison between FairSample and the baselines. OOM is short for ``out of memory''. For the LNKD dataset, $x$ and $y$ are the anonymized mean accuracy and $\Delta DP$ of FairSample, respectively. Performance of the baselines on the LNKD dataset are presented in the difference with respect to FairSample.
FairSample outperforms all baselines up to 65.5\% in fairness improvement only at the cost of at most 5.0\% classification accuracy drop. The best records are in bold.} 
\begin{tabular}{|c|c@{\hspace{-3pt}}c|c@{\hspace{-3pt}}c|c@{\hspace{-3pt}}c|c@{\hspace{-3pt}}c|c@{\hspace{-3pt}}c|c@{\hspace{-3pt}}c|}
\hline
\multirow{2}{*}{Method} & 
\multicolumn{2}{c|}{NBA} &
\multicolumn{2}{c|}{PNL} &
\multicolumn{2}{c|}{PZL} &
\multicolumn{2}{c|}{PNG} &
\multicolumn{2}{c|}{PZG} &
\multicolumn{2}{c|}{LNKD} 
\\ \cline{2-13} 
 & \multicolumn{1}{l}{ACC (\%) } & $\triangle$DP (\%)  
 & \multicolumn{1}{l}{ACC (\%) } & $\triangle$DP (\%)  
 & \multicolumn{1}{l}{ACC (\%) } & $\triangle$DP (\%)  
 & \multicolumn{1}{l}{ACC (\%) } & $\triangle$DP (\%)  
 & \multicolumn{1}{l}{ACC (\%) } & $\triangle$DP (\%)  
 & \multicolumn{1}{l}{ACC (\%) } & $\triangle$DP (\%)  
\\ \hline
GAT & 
\multicolumn{1}{l}{\textbf{71.4$\pm$5.1}}& 11.9$\pm$8.9  &
\multicolumn{1}{l}{65.4$\pm$5.0} & 3.8$\pm$1.4 &
\multicolumn{1}{l}{66.8$\pm$4.7} & 7.6$\pm$4.3 &
\multicolumn{1}{l}{\textbf{68.0$\pm$1.0}} & 10.4$\pm$1.6 &
\multicolumn{1}{l}{65.1$\pm$4.9} & 5.3$\pm$2.4 &
\multicolumn{1}{l}{(x-1.9)$\pm$0.7} & (y+6.8)$\pm$5.5
\\ 
DGI &
\multicolumn{1}{l}{70.4$\pm$3.3} & 14.7$\pm$1.2 &
\multicolumn{1}{l}{67.1$\pm$2.1} & 2.4$\pm$1.3 &
\multicolumn{1}{l}{67.7$\pm$0.4} & 5.8$\pm$2.0 &
\multicolumn{1}{l}{65.6$\pm$1.8} & 11.4$\pm$3.2 &
\multicolumn{1}{l}{66.3$\pm$1.2} & 2.3$\pm$0.8 &
\multicolumn{1}{l}{(x-5.8)$\pm$0.8}         &
(y+3.1)$\pm$1.7 
\\ 
GCA & 
\multicolumn{1}{l}{68.1$\pm$4.7} & 14.2$\pm$6.1 & 
\multicolumn{1}{l}{65.3$\pm$1.5} & 2.7$\pm$2.1 & 
\multicolumn{1}{l}{67.0$\pm$1.0} & 4.7$\pm$2.0 & 
\multicolumn{1}{l}{66.8$\pm$1.0} & 9.6$\pm$2.4 &
\multicolumn{1}{l}{67.0$\pm$1.1} & 3.5$\pm$0.5 &
\multicolumn{1}{l}{OOM} & OOM 
\\ 
\hline
EDITS &
\multicolumn{1}{l}{60.0$\pm$5.9} & 17.4$\pm$10.0 & 
\multicolumn{1}{l}{OOM} & OOM & 
\multicolumn{1}{l}{OOM} & OOM & 
\multicolumn{1}{l}{OOM} & OOM & 
\multicolumn{1}{l}{OOM} & OOM & 
\multicolumn{1}{l}{OOM} & OOM \\ 
FairAdj & 
\multicolumn{1}{l}{70.4$\pm$7.5} & 10.4$\pm$5.6 & 
\multicolumn{1}{l}{OOM} & OOM &
\multicolumn{1}{l}{OOM} & OOM &
\multicolumn{1}{l}{OOM} & OOM &
\multicolumn{1}{l}{OOM} & OOM &
\multicolumn{1}{l}{OOM} & OOM \\ 
FCGE & 
\multicolumn{1}{l}{67.8$\pm$7.5} &
8.6$\pm$7.1 & 
\multicolumn{1}{l}{64.3$\pm$2.5} & 
7.1$\pm$1.5 & 
\multicolumn{1}{l}{67.3$\pm$0.6} & 
2.4$\pm$1.5 & 
\multicolumn{1}{l}{67.2$\pm$1.3} & 10.0$\pm$2.6 &
\multicolumn{1}{l}{66.9$\pm$0.7} & 2.6$\pm$1.3 &
\multicolumn{1}{l}{(x-10.1)$\pm$2.7} & (y+7.2)$\pm$5.7 
\\ 
\hline
MLP & 
\multicolumn{1}{l}{66.8$\pm$3.1} & 
13.4$\pm$10.7 & 
\multicolumn{1}{l}{64.3$\pm$1.4} & 
2.0$\pm$1.6 & 
\multicolumn{1}{l}{66.6$\pm$0.6} & 
3.0$\pm$1.9 &
\multicolumn{1}{l}{63.8$\pm$1.0} & 4.9$\pm$2.4 &
\multicolumn{1}{l}{65.4$\pm$1.6} & 2.9$\pm$1.6 &
\multicolumn{1}{l}{(x-0.4) $\pm$ 0.8} & 
(y+3.6)$\pm$3.3 
\\
GSR & 
\multicolumn{1}{l}{62.0$\pm$4.6} & 7.7$\pm$5.6 & 
\multicolumn{1}{l}{\textbf{67.7$\pm$1.1}} & 
1.1$\pm$0.5 & 
\multicolumn{1}{l}{\textbf{68.0$\pm$0.5}} & 
2.1$\pm$0.9 &
\multicolumn{1}{l}{67.0$\pm$1.0} & 7.5$\pm$3.0 &
\multicolumn{1}{l}{66.5$\pm$0.7} & 1.7$\pm$1.0 &
\multicolumn{1}{l}{(x-7.2)$\pm$2.5} &
(y+4.9)$\pm$2.6 
\\ 
SGSR & 
\multicolumn{1}{l}{56.2$\pm$2.8} & 
8.6$\pm$9.3 & 
\multicolumn{1}{l}{66.3$\pm$1.3} & 
1.4$\pm$0.7 & 
\multicolumn{1}{l}{67.5$\pm$0.5} & 
2.3$\pm$1.5 & 
\multicolumn{1}{l}{65.5$\pm$0.9} & 11.0$\pm$1.4 &
\multicolumn{1}{l}{65.7$\pm$0.8} & 3.0$\pm$1.3 &
\multicolumn{1}{l}{(x-7.1)$\pm$2.6} & 
(y+1.9)$\pm$1.8 
\\ 
PASSR & 
\multicolumn{1}{l}{66.1$\pm$2.2} & 
6.5$\pm$6.6 & 
\multicolumn{1}{l}{67.3$\pm$1.1} & 
1.1$\pm$0.6 &
\multicolumn{1}{l}{67.6$\pm$0.3} & 
1.5$\pm$0.9 & 
\multicolumn{1}{l}{67.0$\pm$0.4} & 10.7$\pm$1.8 &
\multicolumn{1}{l}{\textbf{67.1$\pm$0.7}} & 2.2$\pm$1.3 &
\multicolumn{1}{l}{(x-0.2)$\pm$1.9} & 
(y+2.0)$\pm$1.9 
\\ 
FGCN & 
\multicolumn{1}{l}{67.8$\pm$4.7} & 
11.7$\pm$2.2 & 
\multicolumn{1}{l}{66.2$\pm$1.3} & 
2.5$\pm$1.1 & 
\multicolumn{1}{l}{66.3$\pm$1.3} & 
2.7$\pm$1.2 & 
\multicolumn{1}{l}{66.6$\pm$0.7} & 5.3$\pm$2.3 &
\multicolumn{1}{l}{66.5$\pm$0.8} & 2.9$\pm$1.1 &
\multicolumn{1}{l}{(x-2.2)$\pm$5.1} & 
(y+3.8)$\pm$2.8 \\ 
FGAT & 
\multicolumn{1}{l}{66.1$\pm$5.6} & 
16.8$\pm$11.3 & 
\multicolumn{1}{l}{67.1$\pm$1.5} & 
2.4$\pm$0.8 & 
\multicolumn{1}{l}{65.3$\pm$1.8} & 
3.1$\pm$1.8 & 
\multicolumn{1}{l}{66.9$\pm$0.7} & 6.2$\pm$4.2 &
\multicolumn{1}{l}{66.7$\pm$2.0} & 2.6$\pm$1.6 &
\multicolumn{1}{l}{(x-5.5)$\pm$2.9} & 
(y+7.6)$\pm$4.7 \\ 
NIFTY & 
\multicolumn{1}{l}{59.0$\pm$9.9} & 
7.8$\pm$8.8 & 
\multicolumn{1}{l}{64.7$\pm$2.1} & 
6.5$\pm$1.0 & 
\multicolumn{1}{l}{66.3$\pm$1.1} & 
5.4$\pm$1.5 & 
\multicolumn{1}{l}{65.5$\pm$1.6} & 13.3$\pm$0.8 &
\multicolumn{1}{l}{63.9$\pm$2.6} & 1.7$\pm$0.6 &
\multicolumn{1}{l}{(x-8.9)$\pm$1.3} & 
(y+11.1)$\pm$5.3 
\\ 
\hline
\hline
FS & 
\multicolumn{1}{l}{67.8$\pm$4.3} &
\textbf{6.2$\pm$6.9} & 
\multicolumn{1}{l}{67.4$\pm$0.6} & 
\textbf{0.9$\pm$0.5} & 
\multicolumn{1}{l}{67.7$\pm$0.7} & 
1.1$\pm$0.9 & 
\multicolumn{1}{l}{67.2$\pm$0.6} & \textbf{4.7$\pm$2.1} &
\multicolumn{1}{l}{\textbf{67.1$\pm$0.5}} & \textbf{1.4$\pm$1.3} &
\multicolumn{1}{l}{x$\pm$0.6} & 
\textbf{y$\pm$0.9} 
\\
FS(noise) & 
\multicolumn{1}{l}{67.3$\pm$3.4} & 
10.1$\pm$2.6 & 
\multicolumn{1}{l}{ 67.3$\pm$0.7} & 
1.0$\pm$0.7 & 
\multicolumn{1}{l}{67.5$\pm$1.1} & 
\textbf{0.9$\pm$0.7} & 
\multicolumn{1}{l}{66.8$\pm$0.6} & 5.2$\pm$2.0 &
\multicolumn{1}{l}{67.0$\pm$0.6} & 2.3$\pm$1.4 &
\multicolumn{1}{l}{\textbf{(x+0.2)$\pm$1.5}} & 
(y+1.4)$\pm$2.1 
\\ \hline
\end{tabular}
\label{tbl:baselines}
\end{table*}

For the NBA dataset, we sample $k=10$ neighbors per node in the four sampling-based methods, FairSample, GSR, SGSR, and PASSR. This is because, empirically, we observe that the accuracies of the learned node classifiers are saturated when $k=10$. For the same reason, we set $k=6$ for the LinkedIn dataset and $k=20$ for the remaining four large datasets. All the other baselines are allowed to use the full neighborhood to train GNNs.
We test a wide range of the demographic parity regularizer weight $\alpha \in \{0, 1, 2, 5, 10\}$ in the loss functions of the sampling-based methods.
We vary the hyperparameters of the other fairness enhancing methods within the ranges suggested by their authors to maximize their performance.
We train the sampling-based methods for 300 epochs with an early stopping rule on validation loss. For the other baselines, we use the suggested number of training epochs and early stopping rules in the released source codes.
In FairSample, we tune the number of injected edges per node $m \in \{0, 4, \ldots, 20\}$ and fix the confidence threshold of the pseudo label predictor $\tau=0.8$.
FairGNN assumes that the sensitive values of some nodes may be missing and employs an estimator to predict the missing values. In our experiments, we observe that the worst accuracy of the sensitive value estimator of FairGNN across all datasets is 78\%. 
To fairly compare with FairGNN, we test a variant FairSample (Noise) of FairSample. We uniformly randomly flip 22\% of the sensitive  values in each dataset when FairSample (Noise) is run.

We use the published Python codes of GAT~\cite{FairGNN}, DGI~\cite{DGI}, GCA~\cite{GCA}, FairAdj~\cite{FairAdj}, FCGE~\cite{FCGE}, FairGNN~\cite{FairGNN}, EDITS~\cite{EDITS}, and NIFTY~\cite{NIFTY}. The remaining algorithms are implemented in Python. All experiments are conducted on a server with 8 Tesla P100 GPUs, and 672G main memory.


\begin{table*}[t]
\centering
\caption{The $p$-values of the hypothesis tests. The $p$-values that are less than the significance level $1\%$ are in bold.}
\begin{tabular}{|c|c|c|c|c|c|c|c|c|c|c|c|c|c|}
\hline
Method    & GAT & DGI    & GCA   & MLP   & EDITS & FairAdj & FCGE        & GSR   & SGSR  & PASSR & FGCN & FGAT & NIFTY  \\ \hline
ACC $p$-value &
0.64 &
0.98 &
0.94 &
0.99 &
0.90 &
0.98 &
0.99 &
0.99 &
0.99 &
0.91 &
0.99 &
0.99 &
0.99 
\\ \hline
$\Delta$DP $p$-value  &
\textbf{$\mathbf{8e^{-6}}$} &
\textbf{$\mathbf{6e^{-7}}$} & 
\textbf{$\mathbf{5e^{-6}}$} & 
\textbf{$\mathbf{7e^{-4}}$} & 
0.156 & 
\textbf{$\mathbf{1e^{-3}}$} &
\textbf{$\mathbf{3e^{-7}}$} &
\textbf{$\mathbf{7e^{-4}}$} &
\textbf{$\mathbf{3e^{-4}}$} &
\textbf{$\mathbf{4e^{-4}}$} &
\textbf{$\mathbf{2e^{-4}}$} &
\textbf{$\mathbf{4e^{-4}}$} &
\textbf{$\mathbf{1e^{-5}}$}
\\ \hline

\end{tabular}
\label{tbl:p_value}
\end{table*}

\subsection{Can FairSample Learn Fair and Accurate GCNs?}
\label{sec:exp_baselines}

We evaluate the performance, in demographic parity and accuracy, of FairSample in learning GCNs.  We use accuracy (ACC) to evaluate the utility of a learned node classifier. 
Following the best practice in literature~\cite{DBLP:conf/icml/AgarwalBD0W18, DBLP:conf/wsdm/DaiW21}, the violation of demographic parity can be measured by
$\Delta DP = \text{max}_{a_{i_1}, a_{i_2} \in \mathcal{A}}(| \mathbb{E}[f(\mathbf{x})|s=a_{i_1}] - \mathbb{E}[f(\mathbf{x})|s=a_{i_2}]|)$. A classifier with a smaller $\Delta DP$ is more fair.

Following Donini~\textit{et~al.}~\cite{DBLP:conf/nips/DoniniOBSP18}, we apply a two-step mechanism to choose the best hyperparameter setting for each method.
In the first step, we run each method with each hyperparameter setting 5 times. The highest mean validation accuracy $\text{Acc}_{best}$ across all the methods and hyperparameter settings is identified. A validation accuracy threshold is set to $\text{Acc}_{t} = \text{Acc}_{best} * 0.95$.
In the second step, for each method, among the hyperparameter settings with mean validation accuracies higher than $\text{Acc}_{t}$, we select the hyperparameter setting with the smallest mean validation $\Delta DP$. If a method cannot learn a model with a mean validation  accuracy passing $\text{Acc}_{t}$, we pick the hyperparameter setting with the best mean validation  accuracy.

The performance of all methods is reported in Table~\ref{tbl:baselines}. 
GAT, DGI, and GCA have poor fairness performance, as they only aim to optimize the accuracy of GNNs. They do not achieve the best accuracy on all datasets. This is because the fairness constraints on some baselines may act as regularizers, which potentially reduce overfitting, and thus empirically improve generalization performance~\cite{DBLP:conf/aies/IslamPF21, chu2021fedfair}. GCA does not scale up well as it uses the whole $K$-hop neighborhoods of each node to train node embeddings, and hence suffers from high memory footprints. FairSample uses a computation graph sampler with a small number of parameters to down-sample the input graph. Thus, it enjoys better scalability than GCA.

The pre-processing methods, EDITS, FairAdj, and FCGE do not have good performance in either accuracy or fairness. As they learn node embeddings in an unsupervised manner, they may not achieve a good performance in our node classification tasks.
FairAdj and EDITS cannot handle large datasets. As they adjust the adjacency matrix by assigning a learnable parameter to each edge of the input graph, these methods have a large memory consumption. 
FairSample has better performance than the pre-processing methods, as it can explicitly optimize the utility and fairness in training the node classifiers. 

MLP has worse accuracy and fairness than FairSample, since the graph structure information can help improve the accuracy in node classification~\cite{DBLP:conf/wsdm/DaiW21}. Moreover, FairSample is more fair than MLP, as FairSample intelligently aggregates feature vectors from nodes in different sensitive groups to mitigate unfairness in predictions.

NIFTY has the worst fairness performance among the in-processing baselines, as the method aims to optimize GNNs under the counterfactual fairness rather than the demographic parity fairness.
Despite using a sampled neighborhood, FairSample achieves better accuracy than FairGNN.
FairGNN also does not have good fairness performance, as FairGNN only tries to improve fairness by rectifying the parameters of GNNs and does not mitigate the bias in graph structures. 
To the contrary, by rectifying both the model parameters and the graph structures, FairSample achieves an outstanding performance of training fair GCNs with good accuracies.
The performance of FairSample (Noise) is better than FairGNN, which indicates that FairSample is robust under sensitive value noise. 
FairSample constantly outperforms PASSR and SGSR in both accuracy and fairness.
This is because their sampling strategies do not take both utility and fairness into consideration.

FairSample constantly outperforms GSR in fairness with just a slight accuracy loss on the PNL and PZL datasets. The reason is that our hyperparameter selection method~\cite{DBLP:conf/nips/DoniniOBSP18} favors fairness. Among all the hyperparameter settings with accuracy above a certain threshold, we choose the one with the best fairness performance. To conduct a more comprehensive comparison between FairSample and GSR, we vary the hyperparameter settings of each method, and find the Pareto frontier~\cite{Ngatchou2006-yz} between accuracy and $\Delta DP$. The Pareto frontier of a method characterizes its optimal tradeoff between accuracy and fairness~\cite{DBLP:journals/corr/abs-2201-00292}. 
A model has a better fairness-utility tradeoff if the model achieves the same fairness level with better utility, and vice versa~\cite{DBLP:conf/iclr/LiWZHL21}.
Figure~\ref{fig:pareto_frontier} shows the mean and the standard deviation of $1 - \Delta DP$ and the accuracy of each hyperparameter setting in the Pareto frontier of each method on the PNL and PZL datasets.
Each point in the figure represents the performance of a method with a specific hyperparameter setting.
We observe that FairSample is closer to the top right corner than GSR, indicating that FairSample achieves a better fairness-utility tradeoff. 
For the same level of fairness, FairSample can learn more accurate GCNs than GSR.

To study the statistical significance, for each baseline, we conduct the one sided Wilcoxon signed-rank test~\cite{woolson2007wilcoxon} between FairSample and the baseline to test if FairSample is more fair. The null hypothesis is that the $\Delta DP$ of FairSample is greater than or equal to that of the baseline method. Similarly, we also conduct one sided Wilcoxon signed-rank tests to test if FairSample is less accurate than the baselines. The $p$-values of the hypothesis tests are reported in Table~\ref{tbl:p_value}. 
It shows that FairSample is more fair than all but one baselines with statistical significance. Although the hypothesis test between the $\Delta DP$ of FairSample and EDITS is not statistically significant, as shown in Table~\ref{tbl:baselines}, FairSample has substantially better scalability than EDITS. Table~\ref{tbl:p_value} also shows that there is no statistical significance showing that FairSample is less accurate than any baselines. 

\begin{figure}[t]
\centering
\subfloat[PNL]{
\includegraphics[page=1, width=0.45\linewidth, scale=0.3]{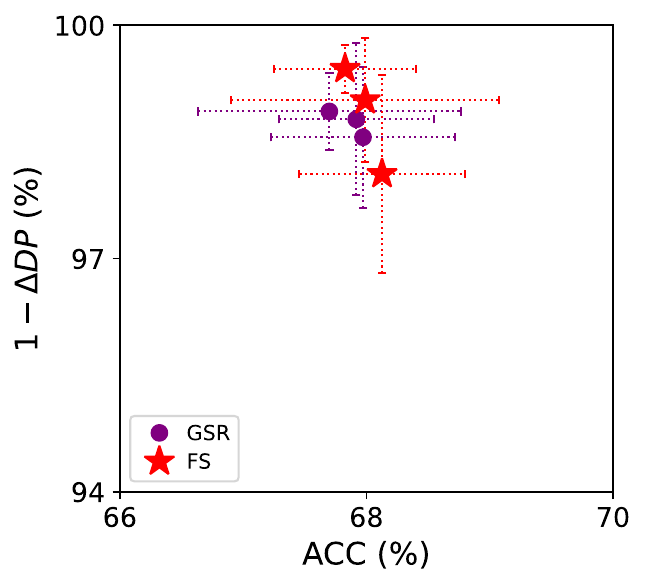}
\label{figure:baseline_pokec_z}
}
\subfloat[PZL]{
\includegraphics[page=2, width=0.45\linewidth, scale=0.3]{Figure/pareto_curve.pdf}
\label{figure:baseline_pokec_n}
}
\caption{The demographic parity and accuracy tradeoff of the models trained by FairSample and GSR. A point closer to the top-right corner is better.
}
\label{fig:pareto_frontier}
\end{figure}

The results clearly demonstrate that FairSample can learn fair and accurate GCNs and outperform the baselines.

\subsection{Can FairSample Mitigate Graph Structure Bias and Improve Fairness?}
\label{exp:structure_only}
As explained in Section~\ref{sec:analyze}, mitigating the structure bias of an input graph may improve the demographic parity of the GCNs built on graphs. To verify our analysis, we evaluate the capability of FairSample in training fair GCNs by reducing graph structure bias alone.

\begin{table}[t]
\caption{The comparison of FairSample with the baselines in training fair GCNs without demographic parity regularizers in the loss functions. The results of all methods on the LNKD dataset are reported in the same way as in Table~\ref{tbl:baselines}. The best records are in bold.}  
\begin{adjustbox}{max width=.98\linewidth}
\begin{tabular}{|c|l|l|l|l|l|}
\hline
Dataset              & Metrics & GSR & SGSR & PASSR & FS$^{nr}$ \\ \hline
\multirow{3}{*}{NBA} 
& ACC (\%)  & 61.5$\pm$2.6 & 56.2$\pm$2.8 & 66.1$\pm$2.2 & \textbf{66.3$\pm$4.1} \\\cline{2-6}
& $\triangle$DP (\%)  & 5.3$\pm$4.5  & 8.6$\pm$9.3 & 6.5$\pm$6.6 & \textbf{5.1$\pm$3.3}     \\\cline{2-6}
\hline
\hline
\multirow{3}{*}{PNL} 
& ACC (\%)  & 67.9$\pm$0.7 &  67.9$\pm$0.8  & \textbf{68.0$\pm$0.6} &  \textbf{68.0$\pm$1.1}   \\\cline{2-6}
& $\triangle$DP (\%)  & 1.9$\pm$0.9 & 1.5$\pm$0.9 & 2.0$\pm$1.4 & \textbf{1.0$\pm$0.8}     \\\cline{2-6}
\hline
\hline
\multirow{3}{*}{PZL} 
& ACC (\%)  & 67.8$\pm$0.5 & \textbf{68.5$\pm$0.6} & \textbf{68.5$\pm$0.3} & 68.1$\pm$0.3     \\\cline{2-6}
& $\triangle$DP (\%)  & 6.6$\pm$2.1 & 6.6$\pm$2.2 & 6.2$\pm$1.4 & \textbf{5.7$\pm$1.1}     \\\cline{2-6}
\hline
\hline
\multirow{3}{*}{PNG} 
& ACC (\%)  & 66.5$\pm$0.7 &  65.5$\pm$0.9  & 67.0$\pm$0.4 &  \textbf{67.5$\pm$0.8}   \\\cline{2-6}
& $\triangle$DP (\%)  & 11.1$\pm$2.3 & 11.0$\pm$1.4 & 10.7$\pm$1.8 & \textbf{10.0$\pm$2.3}   \\\cline{2-6}
\hline
\hline
\multirow{3}{*}{PZG}
& ACC (\%)  & 66.4$\pm$0.9 & 65.5$\pm$1.5 & 67.2$\pm$1.0 & \textbf{67.6$\pm$0.6}  \\\cline{2-6}
& $\triangle$DP (\%)  & 2.4$\pm$2.1 & 2.8$\pm$1.2 & 1.8$\pm$1.6 & \textbf{1.6$\pm$1.0}  \\\cline{2-6}
\hline
\hline
\multirow{3}{*}{LNKD}
& ACC (\%)  & (x-7.8)$\pm$1.0 & (x-9.1)$\pm$1.1 & (x-0.6)$\pm$1.9 & \textbf{x$\pm$1.0}     \\\cline{2-6}
& $\triangle$DP (\%)  & (y+5.7)$\pm$3.3 & (y+6.5)$\pm$6.5 & (y+1.0)$\pm$1.9 & \textbf{y$\pm$1.2}\\\cline{2-6}
\hline
\end{tabular}
\end{adjustbox}
\label{tbl:effect_graph_structure_bias}
\end{table}

To demonstrate the effectiveness of our graph structure bias reduction strategy, we disable the demographic parity regularizer in FairSample and train GCNs with the cross-entropy loss function. Denote by FairSample$^{nr}$ this variant of FairSample. We compare FairSample with three sampling baselines. The weight of the fairness regularizer for each of the three sampling baselines, GSR, SGSR, and PASSR, is set to $0$.
The mean and standard deviation of the performance of all methods are reported in Table~\ref{tbl:effect_graph_structure_bias}.

GS performs poorly because it uniformly samples neighbors without fairness consideration and results in discrimination between different sensitive groups due to inheriting the graph structure bias. SGS sacrifices considerable model accuracy in order to achieve fairness. 
PASS achieves the best accuracy but not fairness score among the baselines, as its sampling policy is optimized for model utility but not fairness. FairSample$^{nr}$ employs a graph enrichment strategy and a fairness-aware computation graph sampling strategy to reduce graph structure bias and improve fairness. Therefore, it achieves superior performance in training fair  and accurate GCNs.

The experimental results strongly confirm that FairSample can mitigate graph structure bias and improve fairness.

\subsection{Convergence and Computational Cost}
\label{exp:convergence}

\begin{figure}[t]
\centering
\includegraphics[height=5mm, page=5, trim={10 0 10 0}, clip]{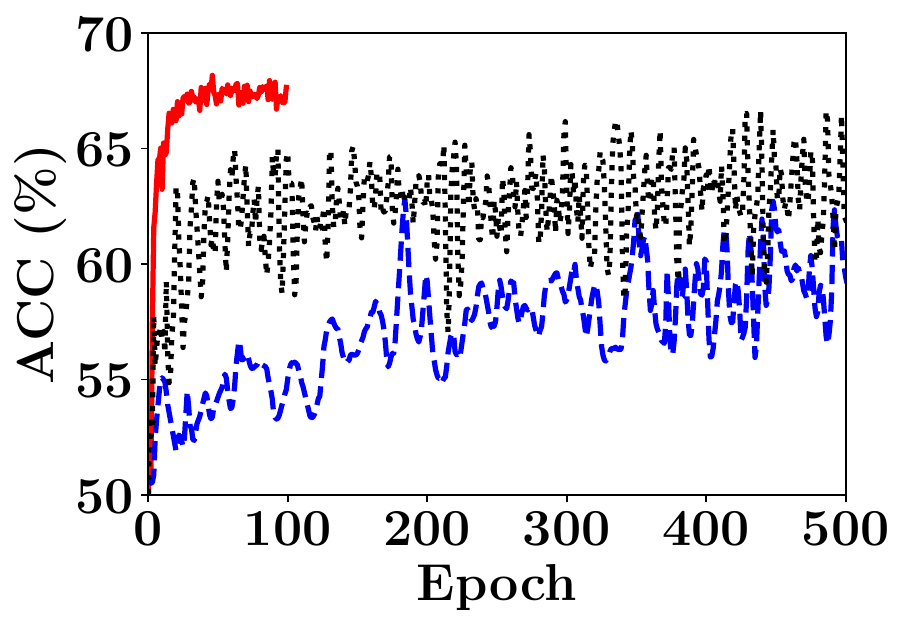}

 
\subfloat[The accuracy convergence curves of FairSample, FGAT, and NIFTY]{
\includegraphics[page=1, width=0.45\linewidth, scale=0.3]{Figure/POKEC_Z_converge.pdf}
\label{figure:acc_convergence_pokec_z_fgnn}
}
\hspace{2mm}
\subfloat[The $\Delta DP$ convergence curves of FairSample, FGAT, and NIFTY]{
\includegraphics[page=3, width=0.45\linewidth, scale=0.3]{Figure/POKEC_Z_converge.pdf}
\label{figure:dp_convergence_pokec_z_fgnn}
}

\subfloat[The accuracy convergence curves of FairSample, GSR, and PASSR]{
\includegraphics[page=2, width=0.45\linewidth, scale=0.3]{Figure/POKEC_Z_converge.pdf}
\label{figure:acc_convergence_pokec_z_pass}
}
\hspace{2mm}
\subfloat[The $\Delta DP$ convergence curves of FairSample, GSR, and PASSR]{
\includegraphics[page=4, width=0.45\linewidth, scale=0.3]{Figure/POKEC_Z_converge.pdf}
\label{figure:dp_convergence_pokec_z_pass}
}
\qquad
\caption{The accuracy and $\Delta DP$ convergence curves of FairSample, FGAT, NIFTY, GSR, and PASSR across training epochs. 
For the sake of display clarity, we report the results of PASSR and GSR separately from the other baselines in Figures~\ref{figure:acc_convergence_pokec_z_pass} and~\ref{figure:dp_convergence_pokec_z_pass}.}
\label{fig:convergence}
\end{figure}

We analyze the convergence and computational cost of FairSample and the baselines in training fair GCNs.

Figure~\ref{fig:convergence} shows the convergence curves of FairSample, PASSR, GSR, FGAT, and NIFTY in accuracy and $\Delta DP$ on the PZL dataset.
Similar observations are obtained on the other datasets. 
We train each method 5 times and report the mean accuracy and mean $\Delta DP$ on the test datasets across the training epochs.
FairSample, GSR, and PASSR converge and stop after around 100 training epochs. 
FGAT and NIFTY require more epochs to converge, we report their convergence curves across the first 500 training epochs.

FGAT has poor convergence performance in both accuracy and $\Delta DP$, as it employs adversarial learning to train fair GNNs, which is known to be unstable during training~\cite{DBLP:conf/aies/BeutelCDQWLKBC19, DBLP:conf/wsdm/DaiW21}. 
NIFTY generates counterfactual graph views by uniformly removing edges and node attributes, as well as flipping sensitive values. However, removing some influential edges and node attributes may deteriorate the utility of GCN classifiers~\cite{DBLP:journals/corr/abs-2201-08549}.
Figure~\ref{figure:dp_convergence_pokec_z_fgnn} shows that NIFTY has the worst $\Delta DP$. This is because NIFTY is not explicitly designed to optimize the demographic parity of GCNs.

Figures~\ref{figure:acc_convergence_pokec_z_pass} and~\ref{figure:dp_convergence_pokec_z_pass} show that FairSample converges very quickly with only around 50 training epochs. 
FairSample outperforms PASSR and GSR in training fair GCNs with a clear margin. This is because FairSample modifies the computation graphs of GCNs to rectify graph structure bias. 

Figures~\ref{figure:train_time_pokec_z} and~\ref{figure:gpu_memory} show the average training time and the average GPU memory usage of FairSample and the other GNN baselines, respectively. It is worth noting that the two baselines, EDITS and FairAdj, exhibit high GPU memory consumption, which leads to the ``out of memory" issue. Therefore, these two methods are not included in the figure.
Limited by space, we only report the results on the PZG dataset. Similar observations are obtained on the other datasets. 

Clearly, FairSample is more efficient and consumes less GPU memory than the non-sampling baselines, as FairSample leverages sampling strategies to reduce the sizes of computation graphs. FairSample requires longer training time and GPU memory than SGSR and GSR. 
The training time of SGSR, GSR, and FairSample is 21.09, 21.56, and 32.27 seconds, respectively. The GPU memory costs of SGSR, GSR, and FairSample are 1371, 1371, and 1413 MB, respectively.
The two baseline methods employ heuristic strategies for computation graph down-sampling, while FairSample uses a learnable smart sampling strategy. Therefore, FairSample has higher computational cost. 

The experimental results confirm that FairSample brings only a small computation overhead to train fair GCNs.

\begin{figure}[t]
\centering
\subfloat[Training time]{
\includegraphics[page=1, width=0.45\linewidth, scale=0.3]{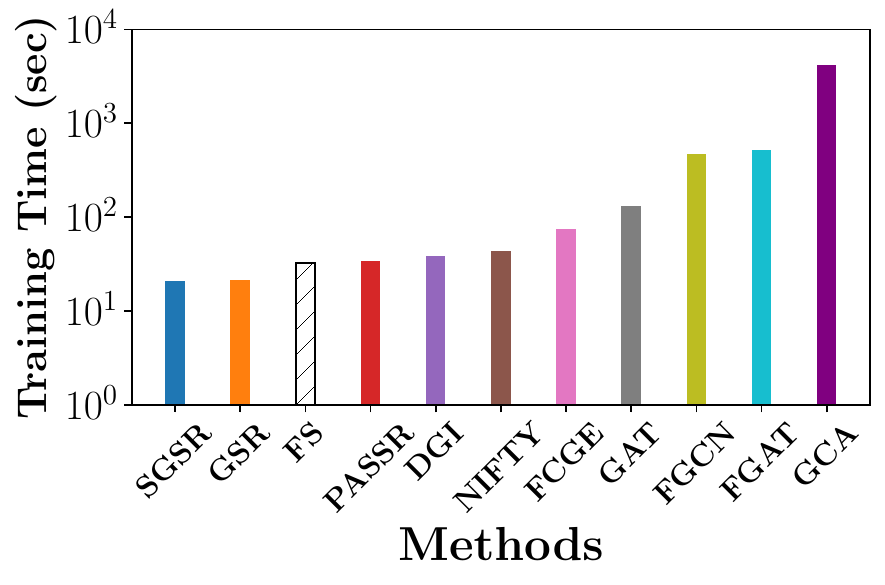}
\label{figure:train_time_pokec_z}
}
\subfloat[GPU Memeory Usage]{
\includegraphics[page=1, width=0.45\linewidth, scale=0.3]{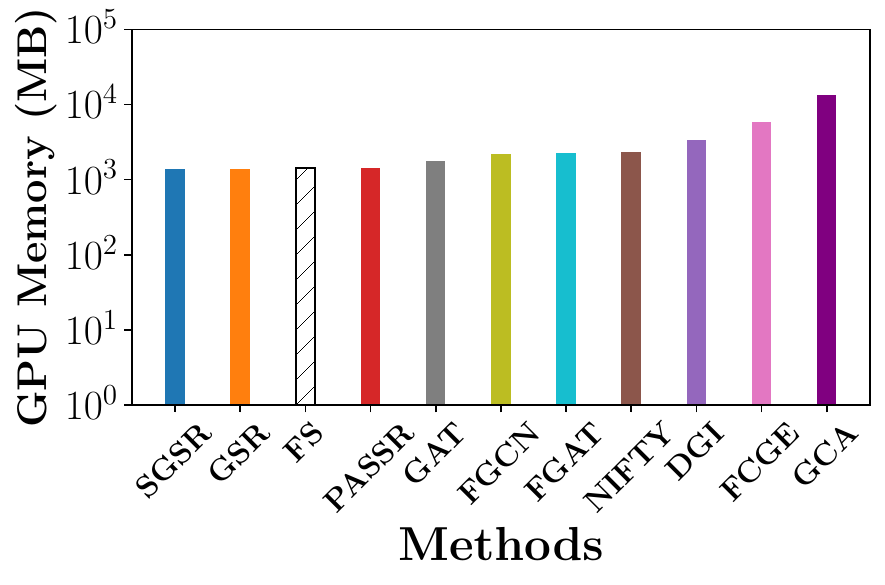}
\label{figure:gpu_memory}
}
\caption{The training time (in second) and GPU memory usage (in MB) of FairSample and the baselines on the PZG dataset. The bars in the figures are sorted in ascending order from left to right based on their values. The y-axis is in logarithmic scale.}
\label{fig:train_time}
\end{figure}

\subsection{Ablation Study}
\label{exp:ablation}

\begin{table}[t]
\caption{Comparison between FairSample and variants. The results of all methods on the LNKD dataset are reported in the same way as in Table~\ref{tbl:baselines}.  The best records are in bold.}
\begin{adjustbox}{max width=.98\linewidth}
\begin{tabular}{|c|l|l|l|l|l|}
\hline
Dataset              & Metrics & FS$^{ne}$ & FS$^{ns}$ & FS$^{nr}$ & FS \\ \hline
\multirow{2}{*}{NBA} 
& ACC (\%)  & 66.8$\pm$2.2 & 65.6$\pm$6.2 &  65.6$\pm$6.2 & \textbf{67.8$\pm$4.3}     \\\cline{2-6}
& $\triangle$DP (\%)  & \textbf{5.1$\pm$5.7} & 8.3$\pm$3.8 & 5.7$\pm$5.3 & 6.2$\pm$6.9    \\\cline{2-6}
\hline
\hline
\multirow{2}{*}{PNL} 
& ACC (\%)  & 67.4$\pm$0.7 & 67.7$\pm$1.1 & \textbf{68.0$\pm$1.1} & 67.4$\pm$0.7     \\\cline{2-6}
& $\triangle$DP (\%)  &  \textbf{0.9$\pm$0.5} & 1.1$\pm$0.5 & 1.0$\pm$0.8 & \textbf{0.9$\pm$0.5}     \\\cline{2-6}
\hline
\hline
\multirow{2}{*}{PZL} 
& ACC (\%)  & 67.5$\pm$0.5 & 68.0$\pm$0.5 & \textbf{68.1$\pm$0.3} & 67.7$\pm$0.7   \\\cline{2-6}
& $\triangle$DP (\%)  & 1.5$\pm$0.7 & 2.1$\pm$0.9 & 5.5$\pm$1.1 & \textbf{1.1$\pm$0.9}     \\\cline{2-6}
\hline
\hline
\multirow{2}{*}{PNG} 
& ACC (\%)  & 67.2$\pm$0.6 & 67.0$\pm$1.0 & \textbf{67.5$\pm$0.8} & 67.2$\pm$0.6 \\\cline{2-6}
& $\triangle$DP (\%)  &  \textbf{4.7$\pm$2.1} & 7.5$\pm$3.0 & 10.0$\pm$2.3 & \textbf{4.7$\pm$2.1}   \\\cline{2-6}
\hline
\hline
\multirow{2}{*}{PZG} 
& ACC (\%)  & 66.9$\pm$1.0 & 66.4$\pm$0.9 & \textbf{67.6$\pm$0.6} & 67.1$\pm$0.5 \\\cline{2-6}
& $\triangle$DP (\%)  & 1.7$\pm$1.4 & 2.4$\pm$2.1 & 1.6$\pm$1.0 & \textbf{1.4$\pm$1.0}    \\\cline{2-6}
\hline
\hline
\multirow{2}{*}{LNKD} 
& ACC (\%)  & (x-0.3)$\pm$1.5 & (x-7.2)$\pm$2.5 & (x-0.1)$\pm$1.5 & \textbf{x$\pm$0.6}     \\\cline{2-6}
& $\triangle$DP (\%)  & (y+0.2)$\pm$0.8 & (y+4.9)$\pm$2.6 & (y+0.9)$\pm$1.3 & \textbf{y$\pm$0.9}     \\\cline{2-6}
\hline
\end{tabular}
\end{adjustbox}
\label{tbl:abalation_study} 
\end{table}

We conduct an ablation study to understand the effectiveness of the edge injector $f_{E}$, the computation graph sampler $f_{S}$, and the demographic parity regularizer in FairSample.

We implement an ablation for each of the components. Denote by FairSample$^{ne}$ the ablation for our edge injector, which trains GCN classifiers on the original input graph. Denote by FairSample$^{nr}$ the ablation for the demographic parity regularizer in the loss function, which trains GCNs by optimizing the cross-entropy loss function. Denote by FairSample$^{ns}$ the ablation for our computation graph sampler, which trains GCNs by performing uniform random node sampling.

Table~\ref{tbl:abalation_study} shows the ablation performance. On all the datasets except the smallest one (NBA), the fairness of FairSample is better than or equal to that of FairSample$^{ne}$. This result aligns well with our analysis that adding inter-group (intra-group) edges to connect similar nodes from the same class benefits the fairness of the GCNs on homophilic (heterophilic) graphs. On the NBA datasets, FairSample underperforms slightly in fairness compared to FairSample$^{ne}$. We select the hyperparameters for each method based on the fairness of the validation dataset. Given FairSample's higher model complexity, it may not generalize as well as FairSample$^{ne}$ on the NBA test datasets.

Similar to FairSample$^{ne}$, the baseline method PASSR also uses reinforcement learning to train its sampling policy.
Different from PASSR, FairSample$^{ne}$ adopts the idea of stratified sampling to train fair GCN classifiers. The performance of PASSR shown in Tables~\ref{tbl:baselines} and the performance of FairSample$^{ne}$ shown in Table~\ref{tbl:abalation_study} indicate that FairSample$^{ne}$ outperforms PASSR in fairness on all cases.

FairSample consistently outperforms FairSample$^{ns}$ with respect to both fairness. FairSample$^{ns}$ uniformly samples the neighbors of each node to train GCNs.  Thus, it is less effective in picking a set of informative and balanced neighbors for GCN training. 

Comparing the performance of FairSample$^{nr}$ and FairSample, we can see that adding a demographic parity regularizer in the loss functions can improve model fairness. Due to the high model complexity and high nonlinearity of GCNs, the fairness regularizer may regularize the GCNs to improve their testing accuracies~\cite{chu2021fedfair}.

\subsection{Parameter Sensitivity}
\begin{figure}[t]
\centering
\subfloat[\# Injected edges per node $m$]{
\includegraphics[page=1, width=0.45\linewidth, scale=0.3]{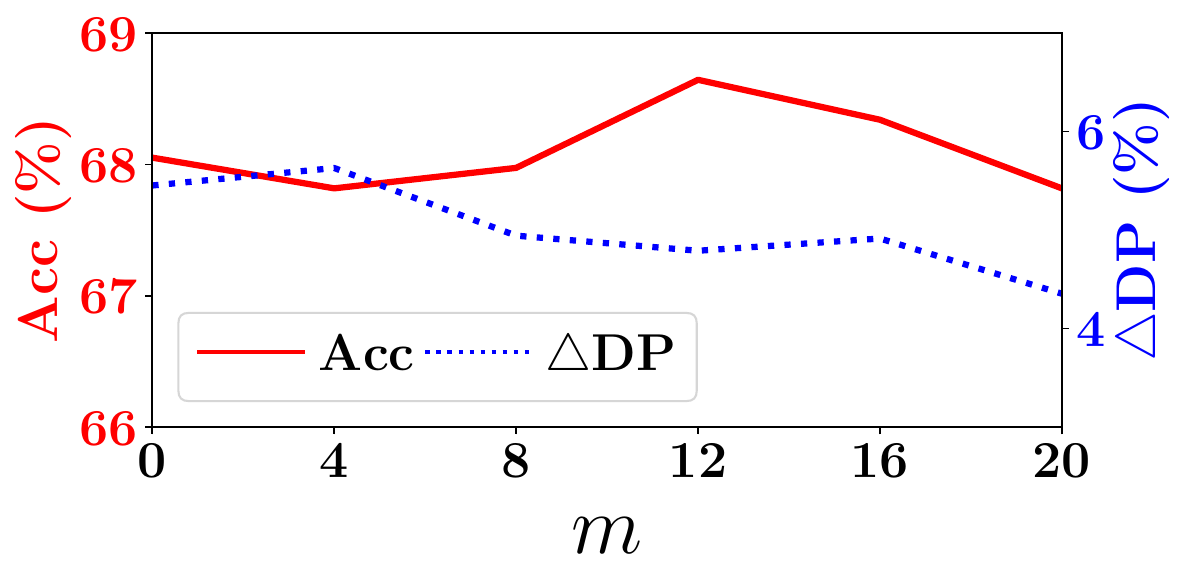}
\label{figure:m}
}
\hspace{2mm}
\subfloat[Fairness regularizer weight $\alpha$]{
\includegraphics[page=2, width=0.45\linewidth, scale=0.3]{Figure/sensitive.pdf}
\label{figure:alpha}
}

\subfloat[Depth of GCN $K$]{
\includegraphics[page=3, width=0.45\linewidth, scale=0.3]{Figure/sensitive.pdf}
\label{figure:K}
}
\hspace{2mm}
\subfloat[\# Sampled child node per node $k$]{
\includegraphics[page=4, width=0.45\linewidth, scale=0.3]{Figure/sensitive.pdf}
\label{figure:k}
}
\caption{The effect of the hyperparameters $m$, $\alpha$, $K$, and $k$ on the demographic parity and accuracy of FairSample.}
\label{fig:parameter_sensitivity}
\end{figure}

We study how the number of injected edges per node $m$, the weight of the demographic parity regularizer $\alpha$, the depth of GCNs $K$, and the number of sampled child nodes per node $k$ affect the demographic parity and accuracy of FairSample. 

To study the effect of a particular hyperparameter on GCNs trained by FairSample, we fix the other hyperparameters and vary the hyperparameter we are interested in.
For each hyperparameter setting, we conduct the experiments 5 times, and report the mean accuracy and $\Delta DP$. Limited by space, we only report the results on the PZL dataset in Figure~\ref{fig:parameter_sensitivity}. Similar observations are obtained on the other datasets. 

Figure~\ref{figure:m} indicates that injecting inter-group edges can reduce $\Delta DP$. This is because injecting inter-group edges can help our computation graph sampler in sampling a balanced neighborhood for training fair GCNs.
As shown in Figure~\ref{figure:m}, the accuracies of the GCNs change within a small range as $m$ increases, which suggests that the injected edges can improve model fairness and retain a good model accuracy. 
Figure~\ref{figure:alpha} shows that a larger $\alpha$ leads to a more fair model, but incurs a higher accuracy cost. Figure~\ref{figure:K} shows the effect of the depth of GCNs $K$ on model accuracy and fairness. When $K=0$, the GCNs degenerate to logistic regression models. On the one hand, the accuracies of the GCNs increase as $K$ increases from $0$ to $2$, as the graph structure information is used to learn better representations.
On the other hand, a large $K$ may cause the over-smooth issue~\cite{DBLP:conf/icml/ChenWHDL20, DBLP:conf/kdd/LiuGJ20}, and hence does not help with model accuracy. The figure also indicates that a large $K$ can benefit mode fairness. This is because deeper GCNs are capable of learning more sophisticated decision boundaries to satisfy the demographic parity constraint. 
Since GCNs use the $K$-hop neighbors to train node embeddings, the computational cost increases with $K$. The effect of $k$ is reported in Figure~\ref{figure:k}. Sampling more neighbors for GCN training can learn GCNs with better utility. However, it also leads to higher computation cost.
When $k=1$, the GCNs make approximately random predictions. Therefore, they have small $\Delta DP$.
When the GCN classifiers have reasonable accuracies, the $\Delta DP$ of GCNs decrease as $k$ increases. This is because each node is capable of aggregating information from more diversified neighbors.

\section{Conclusions}
\label{sec:conclusion}

In this paper, we tackle the challenging problem of building demographic parity fair and accurate GCNs efficiently for semi-supervised node classification tasks.
We systematically analyze how graph structure bias, node attribute bias, and model parameters may affect the demographic parity of GCNs. Based on the theoretical analysis, we develop a novel framework FairSample for GCN training, which employs a graph enrichment strategy and a computation graph sampling strategy.  We report extensive experiments demonstrating the superior capability of FairSample in training GCN classifiers with good demographic parity and high accuracy.  In five public benchmark datasets and one LinkedIn production dataset, we observe an improvement up to 65.5\% in fairness at the cost of only at most 5\% classification accuracy drop.

FairSample is the first step into the frontier of designing sampling strategies for fair and accurate GCNs. There are a few interesting ideas for future work. First, 
it is valuable to extend FairSample to improve the other fairness notions in GCNs, such as individual fairness~\cite{DBLP:conf/innovations/DworkHPRZ12}. Second, an in-depth theoretical understanding of the potential and boundaries of sampling strategies for fair GCNs in more general settings remains an interesting challenge.

\nop{
\ifCLASSOPTIONcompsoc
  \section*{Acknowledgments}
\else
  \section*{Acknowledgment}
\fi

The authors would like to thank...
}

\bibliographystyle{IEEEtran}
\bibliography{ref}




%

\nop{
\begin{IEEEbiography}[{\includegraphics[width=1in,height=1.25in,clip,keepaspectratio]{Photos/zicun cong}}]{Zicun Cong}
Zicun Cong is a staff data scientist at Zscaler. He obtained his Ph.D. degree from the School of Computing Science, Simon Fraser University, Canada. His research interests lie in machine learning and data mining, with an emphasis on data pricing and trustworthy AI. He has worked extensively on interpreting the internal mechanisms and improving the fairness of complex machine learning and statistical models.
\end{IEEEbiography}

\begin{IEEEbiography}[{\includegraphics[width=1in,height=1.25in,clip,keepaspectratio]{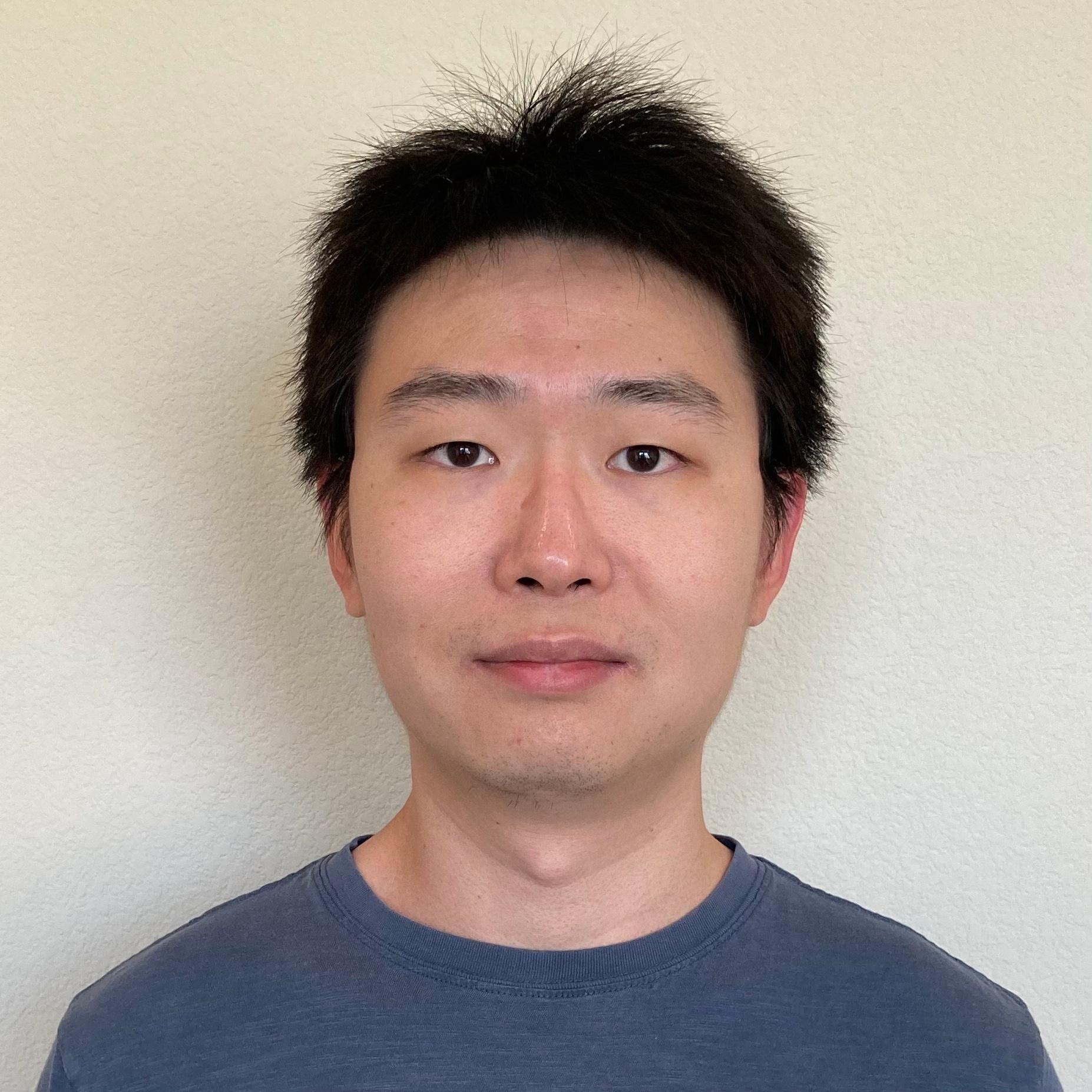}}]{Baoxu Shi}
Baoxu Shi is a Machine Learning Engineer at Nextdoor. His research interests include Recommender Systems, Knowledge Graph Representation Learning, and Knowledge Graph Construction. He obtained his Ph.D. degree from the University of Notre Dame with a focus on Knowledge Graph Completion and Knowledge Graph Mining. He regularly serves as the program committee members for conferences including AAAI, ACL, EMNLP, ICWSM, NAACL, SDM, KDD.
\end{IEEEbiography}

\begin{IEEEbiography}[{\includegraphics[width=1in,height=1.25in,clip,keepaspectratio]{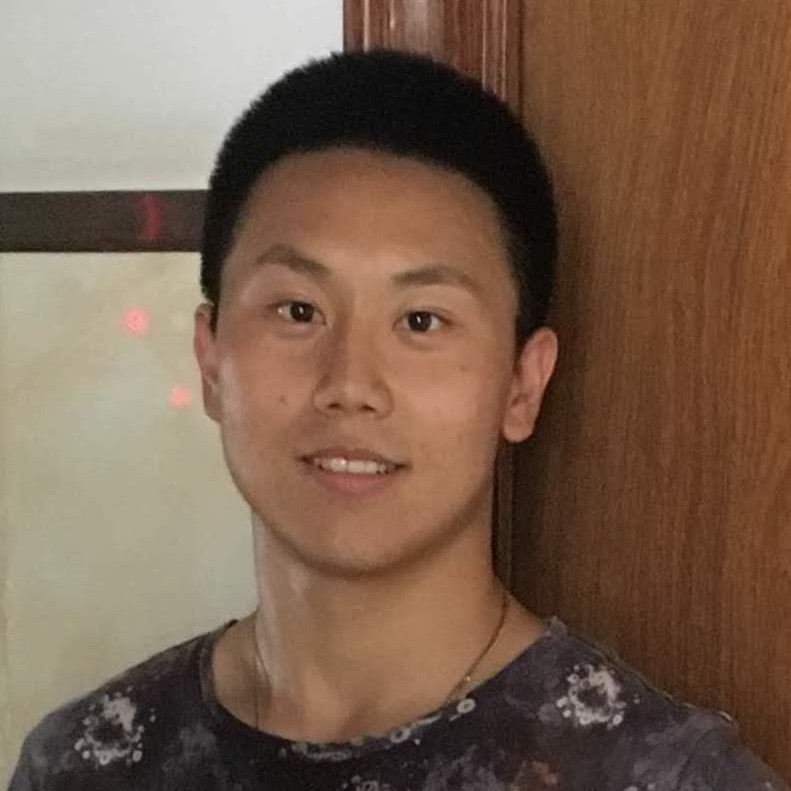}}]{Shan Li}
Shan is a Machine Learning Engineer at Nextdoor, expertising on building large scale recommender systems. Prior to Nextdoor, he was a Sr. Machine Learning Engineer at LinkedIn working on Knowledge Graph Construction. His research interests include recommender systems, natural language processing, knowledge graph and representative learning. Shan obtained his Master's degree in Language Technology Institute, School of Computer Science at Carnegie Mellon University. He regularly serves on the program committee of SIAM and DASFAA.
\end{IEEEbiography}

\begin{IEEEbiography}[{\includegraphics[width=1in,height=1.25in,clip,keepaspectratio]{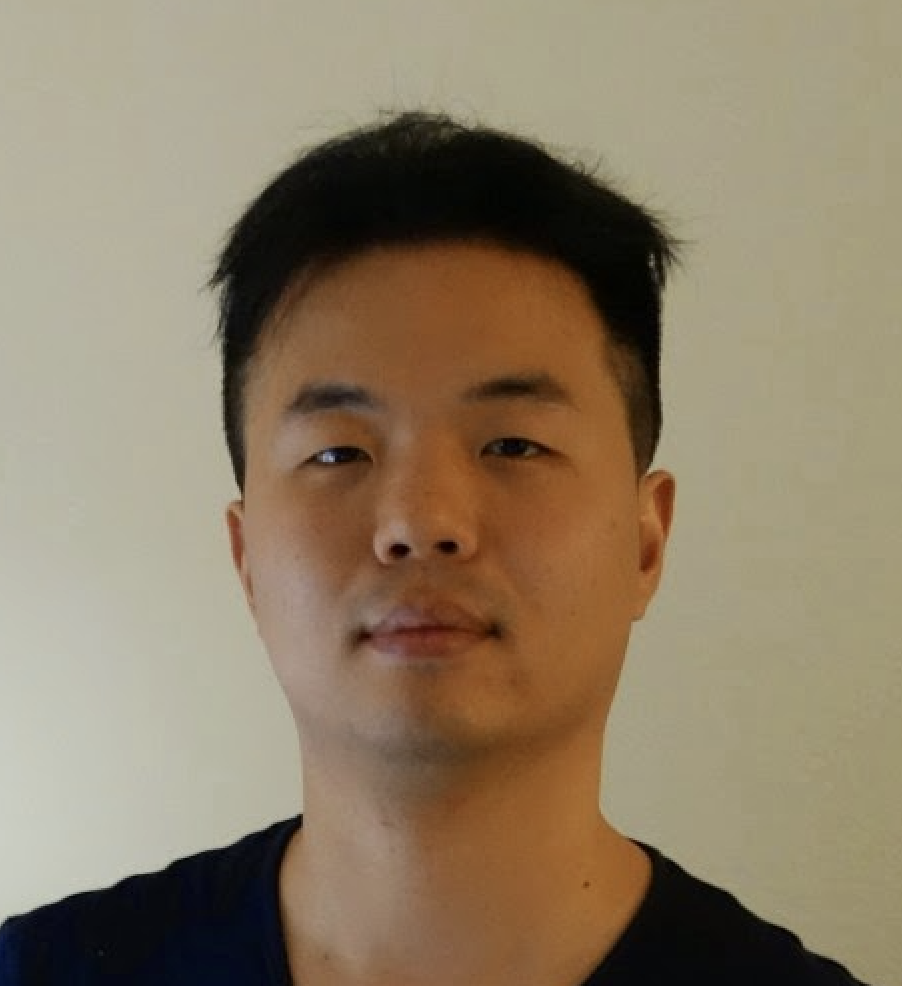}}]{Jaewon Yang}
Jaewon Yang is a Sr. AI Architect at Nextdoor, designing AI models and systems to power Nextdoor’s search and recommendation products. Prior to Nextdoor, he has developed deep learning models for Social Network Analysis, Natural Language Processing and Knowledge Graph at LinkedIn and Stanford University for 15+ years. Jaewon obtained a Ph.D in Machine Learning and a Master in Statistics at Stanford Infolab. He has regularly served on the program committee of SIGKDD, WSDM, WWW and CIKM for 10+ years. He received the SIGKDD doctoral dissertation award, the WSDM test-of-time award, the ICDM KAIS journal test-of-time award and the ICDM best paper award. He has published 40+ publications with 7000+ citations.
\end{IEEEbiography}

\begin{IEEEbiography}[{\includegraphics[width=1in,height=1.25in,clip,keepaspectratio]{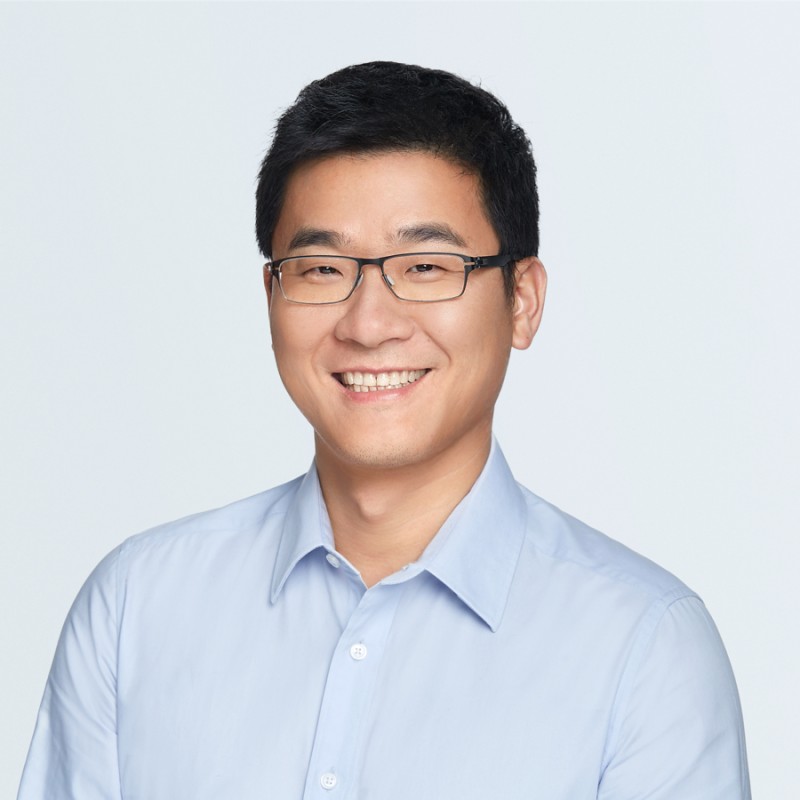}}]{Qi He}
Qi He is the Head of AI and VP of Engineering at Nextdoor. His research focuses on applied machine learning, knowledge graph, natural language processing, information retrieval and data mining. His expertise is leading and executing large complex AI projects, innovating and building the necessary AI technologies, and realizing the real business impacts.
He received the 2008 ACM SIGKDD Best Application Paper Award and the 2020 ACM WSDM 10-year Test of Time Award. He is an ACM Distinguished Member. He serves as the steering committee member of ACM CIKM, the Associate Editor of IEEE TKDE. He is a senior member of IEEE.
\end{IEEEbiography}

\begin{IEEEbiography}[{\includegraphics[width=1in,height=1.25in,clip,keepaspectratio]{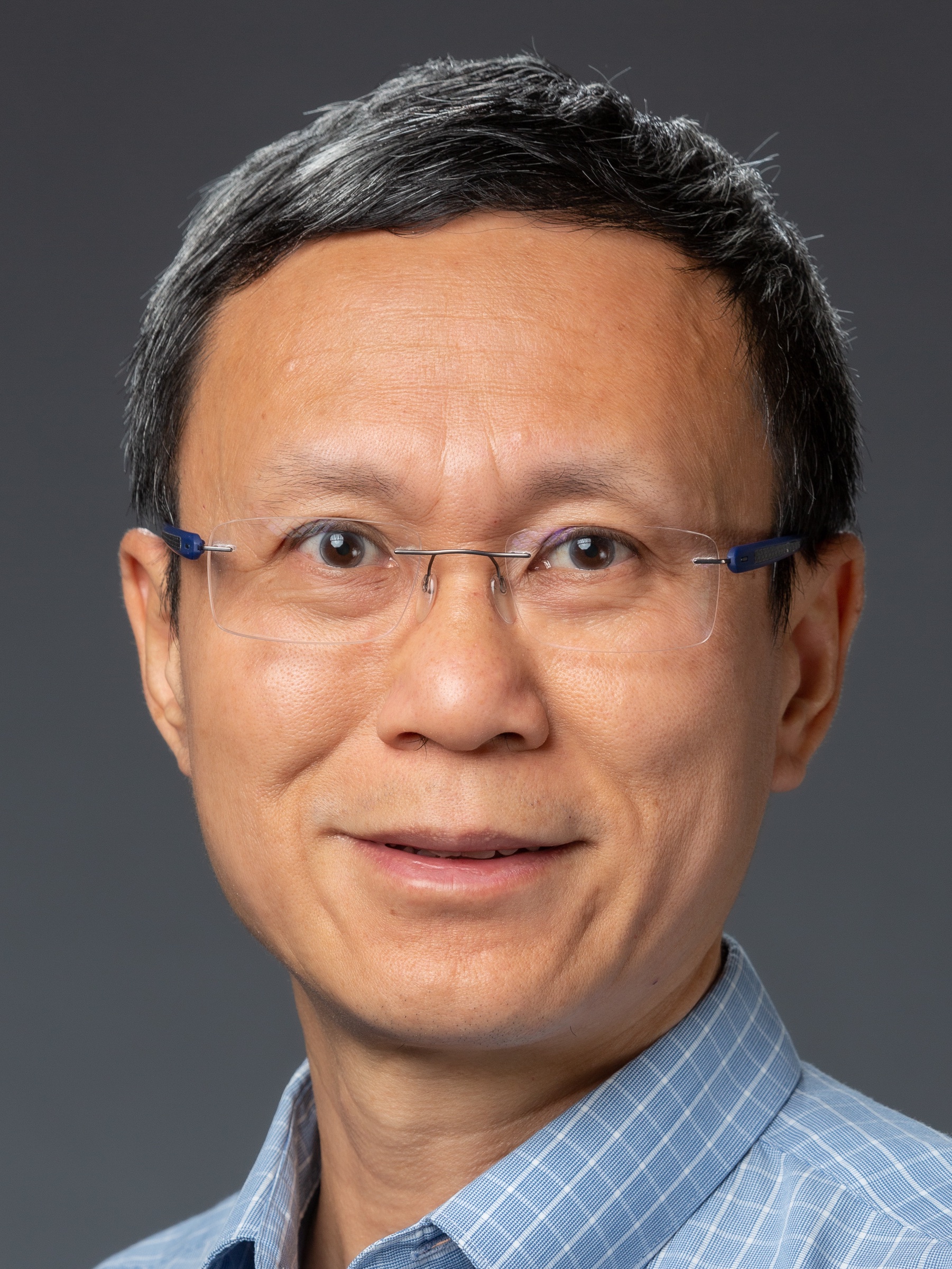}}]{Jian Pei}
Jian Pei is Professor at Duke University. His research focuses on data science, data mining, database systems, information retrieval and applied machine learning. His expertise is on developing effective and efficient data analysis techniques for novel data intensive applications and transferring to products and business practice. He is recognized as a Fellow of the Royal Society of Canada (Canada's national academy), the Canadian Academy of Engineering, ACM and IEEE. He received several prestigious awards, including the 2017 ACM SIGKDD Innovation Award, the 2015 ACM SIGKDD Service Award, and the 2014 IEEE ICDM Research Contributions Award. He was a past chair of ACM SIGKDD and a past EIC of IEEE TKDE.
\end{IEEEbiography}

\begin{IEEEbiographynophoto}{John Doe}
Biography text here.
\end{IEEEbiographynophoto}


\begin{IEEEbiographynophoto}{Jane Doe}
Biography text here.
\end{IEEEbiographynophoto}
}



\end{document}